\newtheorem{theorem}{Theorem}
\newtheorem{corollary}{Corollary}
\newtheorem{lemma}{Lemma}
\newtheorem{proposition}{Proposition}
\title{Instance-Dependent PU Learning by \\ Bayesian Optimal Relabeling}
\font\myfont=cmr12 at 13pt
\author{\myfont Fengxiang~He\thanks{He, Liu, and Tao are with the UBTECH Sydney Artificial Intelligence Centre and the School of Computer Science, in the Faculty of Engineering, at The University of Sydney, 6 Cleveland St, Darlington, NSW 2008, Australia (email: {fengxiang.he, tongliang.liu, dacheng.tao}@sydney.edu.au).} \ \ \ \ \ Tongliang~Liu\footnotemark[1] \ \ \ \ \ Geoffrey~I~Webb\thanks{Webb is with Faculty of Information Technology, Monash University, Clayton, VIC 3800, Australia, (email: geoff.webb@monash.edu).} \ \ \ \ \ Dacheng Tao\footnotemark[1]}
\date{}
\begin{document}

\maketitle

\begin{abstract}
  When learning from positive and unlabelled data, it is a strong assumption that the positive observations are randomly sampled from the distribution of $X$ conditional on $Y = 1$, where X stands for the feature and Y the label. Most existing algorithms are optimally designed under the assumption. However, for many real-world applications, the observed positive examples are dependent on the conditional probability $P(Y = 1|X)$ and should be sampled biasedly. In this paper, we assume that a positive example with a higher $P(Y = 1|X)$ is more likely to be labelled and propose a probabilistic-gap based PU learning algorithms. Specifically, by treating the unlabelled data as noisy negative examples, we could automatically label a group positive and negative examples whose labels are identical to the ones assigned by a Bayesian optimal classifier with a consistency guarantee. The relabelled examples have a biased domain, which is remedied by the kernel mean matching technique. The proposed algorithm is model-free and thus do not have any parameters to tune. Experimental results demonstrate that our method works well on both generated and real-world datasets.
\end{abstract}

\newpage

\section{Introduction}

Instances are required to be labelled as either positive or negative in traditional binary classification tasks. However, a new data setting emerges out in recent decades and breaks this convention. This is because for many situations, only positive labels are identifiable while negative ones are not. For example, a number of datasets in molecular biology \cite{galperin2007molecular, elkan2008learning} contain proteins that are known to have particular functions. Under some restrictions, instances can only be labelled when the functions are active and observed. This does not mean that the unlabelled instances do not have the functions. Another example is to recognize customers who are interested in one product from the customer profiles. The customers who have bought the product can be considered as positive examples. However, the others still have varying potentials to be interested. Those situations are normal in many areas where we can only label a group of positive examples and leave all others without labels, such as in biology \cite{yang2014ensemble}, online commerce \cite{ren2014positive}, and cyber security \cite{zhang2017poster}. Thus, we are facing a great amount of positive and unlabelled data (PU data) in real-world applications. An important problem is raised that how to efficiently learn a classifier from positive and unlabelled data (PU learning).

A basic question for PU learning is how to describe the phenomenon that some positive examples are labelled, but the others are not. To address this issue, we use a {\it one-side noise} framework used in many previous works; see, for example, \cite{elkan2008learning, letouzey2000learning, natarajan2013learning}. Specifically, we treat the unlabelled examples as negative ones. By this way, the PU datasets become full-labelled datasets in which negative instances all have true labels while positive ones may have wrong labels. Under the one-side noise framework, we could model the miss-label phenomenon by {\it mislabelled rates} $\rho_+(X)$ and $\rho_-(X)$ which respectively express the probability that a positive or negative instance is unlabelled:
\begin{gather}
	\rho_+(X) = P(\tilde Y = +1 | Y = -1, X), \nonumber\\
	\rho_-(X) = P(\tilde Y = -1 | Y = +1, X). \nonumber
\end{gather}where $X$ is the instance, $Y \in \{-1, +1\}$ is the true label which is treated as a latent variable due to its unavailability, and $\tilde Y \in \{+1, -1\}$ is the corresponding observed label. Generally, we define the following notation for all examples including both positive and negative ones:
\begin{equation}
    \rho(X, Y) = \begin{cases}
    	\rho_+(X) &, Y = +1, \\
        \rho_-(X) &, Y = -1,
    \end{cases}
\end{equation}
Straightly from the fact that the observed labels $\tilde{Y}$ of  all negative instances ($Y = -1$) are correct, we can get $\rho_-(X) = 0$.

Many existing methods assume that the mislabelled rate is a constant for all positive examples; see, for example, \cite{elkan2008learning}. A more general and realistic PU learning model would have its mislabelled rate being instance-dependent, which assumes that the mislabelled rate depends on the instance. Therefore, we propose a new hypothesis for the basic question: whether an example is labelled or not depends on its feature. Furthermore, the more difficult to label an instance, the higher its mislabelled rate $\rho(X, Y)$ is.

To mathematically measure the difficulty of labelling an observation, we defines the {\it probabilistic gap} that equals to the difference between the Bayesian posteriors $P(Y=1|X)$ and $P(Y=-1|X)$ given data, i.e.,
\begin{equation}
	\Delta P(X) = P(Y=1 | X) - P(Y = -1 | X).
\end{equation}
Intuitively, the probabilistic gap expresses the distance from a positive instance to the classifier. The smaller the probabilistic gap, the more difficult to label the instance. Therefore, we propose to model the mislabelled rate $\rho(X, Y)$ as a monotone decreasing function of its corresponding probabilistic gap $\Delta P(X)$. In this paper, we name this model as {\it probabilistic-gap PU model} (PGPU).

However, $P(Y | X)$ is not accessible due to the unavailability of the latent true label $Y$. Therefore, we cannot get $\Delta P(X)$ directly. To handle this problem, we develop a method to estimate $\Delta P(X)$. Let
\begin{equation}
	\Delta \tilde{P}(X) = P(\tilde Y = 1 | X) - P(\tilde Y = -1 | X).
\end{equation}
In contrast to $\Delta P(X)$, $\Delta \tilde{P}(X)$ can be estimated by many well-developed probability density estimation methods. Based on the PGPU model, we could further construct a mapping from $\Delta \tilde{P}(X)$ to $\Delta P(X)$ by exploiting $\rho(X, Y)$. With the information of $\Delta P(X)$, a {\it Bayesian optimal relabelling} method is then designed to re-label a group of examples. The new labels are identical to the ones assigned by the Bayesian optimal classifier (details will be given in Section 5). By this method, we can automatically and largely extend the labelled examples with theoretical guarantees. The extended labelled dataset contains both negative and positive examples. A classifier is then learned on this dataset. It should be noted that the domain of the new labelled dataset is biased, because the observations in a sub-domain of the feature domain could never be labelled. To remedy this bias, we use a reweighting technique, kernel mean matching \cite{huang2007correcting}, when training the classifier. Our method is then evaluated on both generated and real-world datasets. Empirical results prove our method's feasibility.

The rest of this paper is structured as follows. In Section 2, the related work is introduced. In Section 3, we formalize our research problem. In Section 4, 5, and 6, we respectively present the PGPU model, the Bayesian optimal relabelling method, and the reweighting classification method in detail. In Section 7, a theoretical analysis is provided. In Section 8, we provide the experimental results. In Section 9, we conclude our paper and discuss the future work.

\section{Related Work}\label{sec:relatedwork}

Many practical methods dealing with PU learning have been delivered in recent decades. Their frameworks can be roughly divided into two classes: one is with the main focus being on exploiting how to extend the labelled datasets; the other handles the challenge by constructing a probabilistic model and learning classifiers on the positive and unlabelled data. It should be noted that some articles in the second part are motivated by the approaches for classification tasks with label noise \cite{kearns1993learning, cesa1999sample, bshouty2002pac}., and sometimes are called {\it one-side label noise model}\footnote{The unlabelled examples are treated as noisy negative ones. Therefore, the label noises in the whole dataset is only because of the false negative examples. Methods dealing with this setting is called the one-side label noise model.}.

To the first end, a major thrust has been on developing methods that extract reliable negative (or positive, equivalently) from the unlabelled data, e.g., through iterative 2-step processes. In the first step, standard classification methods, such as SVM \cite{li2003learning, yu2002pebl} and logistic regression \cite{lee2003learning}, are used to identify negative data points with significant confidence from unlabelled examples by the classifier learned in the given positive dataset. In the next step, the extracted negative examples are used with the original positive examples to refine the previous classifier. These two steps run iteratively to select confident labelled examples from unlabelled datasets. Similar to these iterative classification methods, EM-like algorithms are also proposed for the PU learning \cite{liu2003building, liu2002partially}. One important disadvantage of these methods is that the extraction of the reliable labels lacks theoretical guarantees.

The second way to learn from the PU data is to treat the unlabelled examples as negative ones; therefore, the PU datasets are transferred into full-labelled datasets with label noises. Elkan and Noto assume that all observed positive examples are randomly drawn from the positive examples, and then use a reweighting technique to adapt traditional binary classification methods to be robust in learning with PU data \cite{elkan2008learning}. Most models here are based on a strong assumption that the mislabelled rate is a constant. In this paper, we have a weaker assumption: the positive example closer to the latent optimal classifier is more difficult to be labelled. In the rest of this paper, we will introduce why it is feasible. Some other work avoids any assumption but introduces more labelling labour; see, for example, Letouzey et al. use positive statistical queries to estimate the distribution of positive instance space and instance statistical queries to estimate the distribution of the whole instance space. Based those estimations, they further develop a binary classification algorithm\cite{letouzey2000learning}.

Furthermore, as the one-side label noise model suggests, PU data is a degenerate case of the label noise problem, where both positive and unlabelled examples can be mislabelled. However, for PU learning, the positive examples are always accurately labelled. Therefore, the PU data can also be solved directly by class-dependent label noise methods. For example, Natarajan et al. assume that the noise rate depends on the class and provides an unbiased estimator for the noise rate \cite{natarajan2013learning}. Based on the estimator, they further propose a reweighting method to learn a classifier from the noisy data. Liu and Tao also assume the noise rate is a constant for each class, and find a data-dependent upper bound for the noise rate, which leads to an estimation method \cite{liu2016classification}. A reweighting classification method is then employed to learn a classifier from the noisy data. Label noise methods are also applicable in many similar areas, such as transfer learning \cite{yu2017transfer}, and learning classifier from data with complementary labels \cite{ishida2017learning, yu2017learning}.

The practical methods of PU learning also yield many concerns about the consistency and the class-prior estimation. As the unlabelled data contains both positive and negative examples, simply separating the positive and unlabelled data could lead to biased solutions even when reweighting techniques are employed. Under a restriction that the mislabelled rate is a constant, the consistency issue can be completely solved by using the loss functions $l(z)$ such that $l(z)+l(-z) = \text{constant}$. A common choice for such loss functions is the ramp loss which leads to the robust support vector machine \cite{collobert2006trading, wu2007robust}. However, the ramp loss is non-convex which could be computationally expensive. To address this issue, Plessis et al. propose a convex framework, where the learned classifiers are proved to be asymptotically consistent to the optimal solution with a lower computational cost \cite{du2015convex}. In addition, many existing methods rely on the estimation of the class priors. However, the absence of observed negative examples can also bring biases into the estimation. \cite{scott2009novelty,blanchard2010semi} prove that the bias of the class-prior estimation can be remedied by the Neyman-Pearson classification. Meanwhile, Liu and Tao proves that the class priors are bounded by the conditional probabilities $P(\tilde Y | X)$ and then provide an estimator as the bound can be easily reached \cite{liu2016classification}. Also, Plessis et al. hire the penalized $L_1$-distance for the estimation to avoid the bias \cite{du2015class}. Additionally, the class-prior estimation problem can be solved under the framework of {\it mixture proportion estimation} (MPE) \cite{woodward1984comparison,scott2015rate,redner1984mixture}. By setting a restriction weaker than other works that the component distributions are independent from each other, Yu et al. propose an MPE method with the state-of-art performance \cite{yu2018efficient}.

PU learning methods can also extended to the multi-class classification problem; see, for example, Xu et al. propose an algorithm with strong theoretical guarantees that the generalization bound is no worse than $k \sqrt{k}$ times of the fully-supervised multi-class classification methods \cite{xu2017multi}. In the recent decade, PU learning methods have been applied to many areas, such as web mining \cite{yu2004pebl, chou2018learning}, and health data mining\cite{chen2016mining}. Additionally, PU learning is related with semi-supervised learning closely, which deals with the datasets that contains many unlabelled instances. Therefore, we could directly use supervised learning methods to deal with PU data; see, for example, \cite{luo2018semi,akbarnejad2018efficient,yu2018semi}.

\section{Problem Setup}\label{sec:problemsetup}

Suppose $S = \{(x_1, s_1), \ldots, (x_n, s_n)\}$ is a positive and unlabelled dataset, where $x_i \in \mathbb{R}^d$ is the instance and $s_i \in \{1, \text{NULL}\}$ is the corresponding observed label. Here, the label $1$ means positive and the label $\text{NULL}$ means unlabelled. In the same time, each example has a latent true label which is not accessible. We denote the true labels as $y_i \in \{ \pm 1 \}, i = 1, \ldots, n$. For simplicity, we rewrite the $\text{NULL}$ as the negative ($-1$). Therefore, in this new dataset, all the positive labels are clean while the negative labels are corrupted. In other words, we will solve the PU learning problem by employing the one-side label noise model. Before this, we introduce some preliminary results.

Bayesian optimal classifier assigns the labels with the highest posterior probabilities to instances \cite{duda1973pattern, bousquet2004introduction}. In binary classification circumstances, the Bayesian optimal classifier can be written as
\begin{equation}
\label{eq:bayes}
\hat Y(x) = \begin{cases}
+1,& P_+ - P_- > 0, \\
\text{randomly selection},& P_+ - P_- = 0, \\
-1,& P_+ - P_- < 0.
\end{cases}
\end{equation}
where $P_+ = P(Y = +1 | X)$ and $P_- = P(Y = -1 | X)$. This result can also be written as the following lemma. We will need it later.

\begin{lemma}
	For any distribution $D$, suppose the examples satisfy $(X, Y) \sim D$. Then the classifier satisfies
    \begin{equation}
    	f(X) = \text{sign}(P(Y = 1 | X) - 1/2),
    \end{equation}
    is a Bayes optimal classifier. 
\end{lemma}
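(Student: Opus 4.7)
The plan is to derive the stated form of the classifier directly from the definition of the Bayes optimal classifier given in equation~(\ref{eq:bayes}), using only the fact that the label alphabet is binary. The Bayes optimal rule already appears above: it outputs $+1$ when $P_+ - P_- > 0$, outputs $-1$ when $P_+ - P_- < 0$, and breaks ties arbitrarily. The goal is to rewrite the sign of $P_+ - P_-$ in terms of a single quantity $P(Y=1\mid X) - 1/2$, which is exactly what $\text{sign}(\cdot)$ in the statement picks up.

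The first step is to use the law of total probability for the conditional distribution of $Y$ given $X$. Because $Y \in \{-1,+1\}$, the events $\{Y=1\}$ and $\{Y=-1\}$ partition the sample space, so
\begin{equation*}
P(Y=1\mid X) + P(Y=-1\mid X) = 1.
\end{equation*}
Substituting $P_- = 1 - P_+$ into the Bayes decision quantity gives
\begin{equation*}
P_+ - P_- = 2P(Y=1\mid X) - 1,
\end{equation*}
so $\operatorname{sign}(P_+ - P_-) = \operatorname{sign}\!\bigl(P(Y=1\mid X) - 1/2\bigr)$. The second step is simply to check that this identification respects all three cases of the Bayes optimal rule: $P_+ > P_-$ iff $P(Y=1\mid X) > 1/2$, $P_+ < P_-$ iff $P(Y=1\mid X) < 1/2$, and $P_+ = P_-$ iff $P(Y=1\mid X) = 1/2$, which is the tie case where the sign function value (conventionally $0$ or an arbitrary choice in $\{\pm 1\}$) matches the ``random selection'' branch of equation~(\ref{eq:bayes}).

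There is essentially no obstacle here; the statement is a cosmetic reformulation of the Bayes rule specialized to binary labels in $\{-1,+1\}$. The only minor point worth a sentence in the write-up is the tie-breaking convention: one should state that $\operatorname{sign}(0)$ may be defined as either $+1$, $-1$, or chosen at random without affecting Bayes optimality, since on the set $\{P(Y=1\mid X) = 1/2\}$ every measurable assignment achieves the same (optimal) risk. With that remark, the lemma follows immediately from the displayed equivalence.
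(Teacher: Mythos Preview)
Your proposal is correct. The paper itself does not supply a proof of this lemma; it is stated as a preliminary fact immediately after the display~(\ref{eq:bayes}) and simply cited to standard references. Your derivation---using $P_+ + P_- = 1$ to rewrite $P_+ - P_- = 2P(Y=1\mid X) - 1$ and then reading off the three cases of~(\ref{eq:bayes})---is exactly the intended one-line justification, and your remark on the tie-breaking convention at $P(Y=1\mid X)=1/2$ is the only point that merits comment.
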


When learning classifiers from the data, a criterion to measure the performance of the classifier is necessary. Here, we define the {\it risk} to play the role. Suppose all potential classifiers $h$ constitute a hypothesis space $H$. Suppose the unknown distribution of data is $D$. Additionally, we use a loss function $l$ to measure the accuracy for a prediction. Then the risk of the classifier $h$ is defined as
\begin{equation}
R(h, D, l) = E_{(X, Y) \sim D} [l(h(X), Y)],
\end{equation}
The classifier with the highest performance is defined as
\begin{equation}
h^* = \arg \min_{h \in H} R(h, D, l).
\end{equation}

However, as the distribution of the data is unknown, we cannot compute the risk directly from the data. Therefore, we define the {\it empirical risk} on the dataset $S$ to estimate the risk $R(h, D, l)$. The empirical risk is defined as
\begin{equation}
\hat R(h, S, l) = \frac{1}{n} \sum_{i = 1}^n [l(f(x_i), y_i)].
\end{equation}
We can thus estimate $h^*$ by the minimization of the empirical risk:
\begin{equation}
\hat h = \arg \min_{h \in H} \hat R(h, S, l).
\end{equation}
This process is named as the {\it empirical risk minimization} (ERM) \cite{vapnik2013nature}. Bartlett et al. prove that the consistency of $R(\hat h, S, l)$ to $R(h^*, D, l)$ is guaranteed, when the $l$ is a surrogate loss function \cite{bartlett2006convexity}.

\section{Probabilistic-Gap PU Model}

In PU learning, the observed label $\tilde Y$ could be different from the latent true label $Y$. 
Specifically, we have following formulations:
\begin{gather}
	P(\tilde Y = -1 | X, Y = -1) = 1, \\
    P(\tilde Y = +1 | X, Y = -1) = 0, \\
    P(\tilde Y = -1 | X, Y = +1) > 0, \\
    P(\tilde Y = +1 | X, Y = +1) > 0.
\end{gather}
We deal with this phenomenon as a probabilistic problem and use the {\it mislabelled rate} $\rho(X, Y)$ to express it (it is defined by Equation (1)).

In this paper, we define a {\it probabilistic gap} to express the difficulty of labelling a positive instance and then propose a {\it probabilistic-gap PU} (PGPU) model for the mislabelled rate based on the probabilistic gap. The probabilistic gap is defined as the following formulation:
\begin{equation}
	\Delta P(X) = P(Y = +1 | X) - P(Y = -1 | X).
\end{equation}
An important property of the probabilistic gap is that the hyperplane expressed by the equation $\Delta P(X) = 0$ is the Bayesian optimal classifier of the PU dataset. This property is straight-forward from the definition of the Bayesian optimal classifier (see eq. \ref{eq:bayes}).

Intuitively, the smaller the probabilistic gap, the more difficult to label the instance. Therefore, we propose to model the mislabelled rate $\rho(X, Y)$ as a monotone decreasing function of its corresponding probabilistic gap $\Delta P(X)$. We assume there exists a function $f$ such that
\begin{equation}
\rho(X, Y) = f(\Delta P(X)).
\end{equation}
Furthermore, we also assume that the function $f$ has the following properties:
\begin{enumerate}
\item The mislabelled rate $\rho(X, Y)$ is higher than $0$ when the probabilistic gap $\Delta P(X) < 0$, while it vanishes where the probabilistic gap $\Delta P(X) < 0$, i.e.,
\begin{gather}
\label{eq:prop_1}
	f(x)|_{x>0} = \rho_+ > 0,\\
\label{eq:prop_2}
    f(x)|_{x<0} = \rho_- = 0.
\end{gather}

\item The mislabelled rate $\rho(X)$ should be monotone decreasing in the term of the probabilistic gap $\Delta P(X)$, i.e.,
\begin{equation}
\label{eq:prop_3}
    f'(x) |_{x>0} = \rho_+'(X) < 0.
\end{equation}
\end{enumerate}

These properties are respectively due to the following two reasons:
\begin{enumerate}
\item All negative instances in PU datasets have correct labels, while some positive ones are incorrectly labelled as negative. Therefore, the observed positive examples are free from corruptions, and the negative ones have label noise. Additionally, $\Delta P(X) = 0$ is a threshold for Bayes optimal classifier. Therefore, we use $\Delta(X) > 0$ and $\Delta(X) < 0$ to respectively express the positive and negative examples.
\item It would be reasonable that there exists a latent optimal boundary that divides the feature domain into two sub-domains respectively for positive and negative, although we don't know where the boundary is. The positive instances closer to the boundary are more difficult to label compared with the ones further away. Therefore, the examples closer to the classifier have higher mislabelled rates. Thus, it would be reasonable to assume that the mislabelled rate should increase while the probabilistic gap $\Delta P(X)$ decreases, as $\Delta P(X)$ can be used to express the distance between the instance and the optimal boundary.
\end{enumerate}

Our method is elegant because we do not assume any exact formulation of the function $f$. We will discuss this later.

\section{Bayesian Optimal Relabeling}\label{bayesianoptimalrelabeling}

In this section, we propose a {\it Bayesian optimal relabelling} method to label a group of unlabelled examples. These new labels are assigned according to the probabilistic gap $\Delta P(X)$. We will prove that they are coincident with the ones that the Bayesian optimal classifier would assign.

\subsection{Observed Probabilistic Gap}

The probabilistic gap $\Delta P(X)$ cannot be computed directly from the data, as the true labels are not accessible. Therefore, we define the {\it observed probabilistic gap} $\Delta \tilde{P}(X)$ in order to infer the probabilistic gap $\Delta P(X)$. Then, we can label a group of unlabelled examples. Similar with the probabilistic gap $\Delta P(X)$, the observed probabilistic gap $\Delta \tilde{P}(X)$ is defined as in the following formulation:
\begin{equation}
	\Delta \tilde P(X) = P(\tilde Y = +1 | X) - P(\tilde Y = -1 | X).
\end{equation}

Applying Bayes theorem, we can divide the posterior probability $P(\tilde Y = +1 | X)$ as the following formulations
\begin{align}
& P(\tilde Y = +1 | X) \nonumber \\
= & P(\tilde Y = +1, Y = +1 | X) + P(\tilde Y = +1, Y = -1 | X) \nonumber \\
= & P(\tilde Y = +1 | Y = +1, X) P(Y = +1 | X) \nonumber \\
& + P(\tilde Y = +1 | Y = -1, X) P(Y = -1 | X) \nonumber \\
= & (1 - \rho_+(X)) P(Y = +1 | X).
\end{align}
Similarly, we can divide the posterior probability $P(\tilde Y = -1 | X)$ as
\begin{align}
& P(\tilde Y = -1 | X) \nonumber \\
= & P(\tilde Y = -1, Y = +1 | X) + P(\tilde Y = -1, Y = -1 | X) \nonumber \\
= & P(\tilde Y = -1 | Y = +1, X) P(Y = +1 | X) \nonumber \\
& + P(\tilde Y = -1 | Y = -1, X) P(Y = -1 | X) \nonumber \\
= & \rho_+(X) P(Y = +1 | X) + P(Y = -1 | X) \nonumber \\
= & \rho_+(X) + (1 - \rho_+(X)) P(Y = -1 | X).
\end{align}
Therefore, we finally get the relation between $\Delta \tilde{P}(X)$ and $\Delta P(X)$
\begin{align}
\label{eq:observe_true_probs}
& \Delta \tilde{P} (X) \nonumber \\
= & P(\tilde Y = +1 | X) - P(\tilde Y = -1 | X) \nonumber \\
= & (1 - \rho_+(X)) P(Y = +1 | X) - \rho_+(X) P(Y = +1 | X) \nonumber \\
& - P(Y = -1 | X) \nonumber \\
= & (1 - 2 \rho_+(X)) P(Y = +1 | X) - P(Y = -1 | X) \nonumber \\
= & (2 - 2 \rho_+(X)) P(Y = +1 | X) - 1 \nonumber \\
= & (2 - 2 \rho_+(X)) \frac{2P(Y = +1 | X)}{2} - 1 \nonumber \\
= & (2 - 2 \rho_+(X)) \frac{P(Y = +1 | X) + 1 - P(Y = -1 | X)}{2} - 1 \nonumber \\
= & (2 - 2 \rho_+(X)) \frac{\Delta P(X) + 1}{2} - 1 \nonumber \\
= & (1 - \rho_+(X)) (\Delta P(X) + 1) - 1.
\end{align}


\subsection{A Bayesian Optimal Relabelling Method}

From eq. (\ref{eq:observe_true_probs}), a theorem can be summarized as follows.\footnote{This theorem is given by \cite{bekker2018learning} based on the results in a previous version of this paper.}

\begin{theorem}
	For PU data, we have the following two properties:
	\begin{itemize}
	\item
	Observed probabilistic gap $\Delta \tilde P(X)$ is not larger than probability gap $\Delta P(X)$; i.e., for any $X$, we have that
	\begin{equation}
		\Delta \tilde P(X) \le \tilde P(X).
	\end{equation}
	
	\item
	Observed probabilistic gap $\Delta \tilde P(X)$ has the same ordering of probabilistic gap $\Delta P(X)$; i.e., for any $X_1$ and $X_2$, we have that
	\begin{gather}
	\label{eq:equal_gap}
	\Delta \tilde P(X_1) = \Delta \tilde P(X_2) \Leftrightarrow \Delta P(X_1) = \Delta P(X_2);\\
	\Delta \tilde P(X_1) \le \Delta \tilde P(X_2) \Leftrightarrow \Delta P(X_1) \le \Delta P(X_2).
	\end{gather}
	\end{itemize}
\end{theorem}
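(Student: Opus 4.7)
The plan is to derive both assertions directly from identity~\eqref{eq:observe_true_probs}, namely
\begin{equation*}
\Delta \tilde P(X) = (1 - \rho_+(X))(\Delta P(X) + 1) - 1.
\end{equation*}
All of the analytic work is essentially already done by this identity; the two properties reduce to elementary monotonicity consequences of it.

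For the first inequality, I would rearrange the identity as
\begin{equation*}
\Delta P(X) - \Delta \tilde P(X) = \rho_+(X)\,(\Delta P(X) + 1).
\end{equation*}
Since $\Delta P(X) + 1 = 2\,P(Y=1 \mid X) \ge 0$ and $\rho_+(X) \ge 0$, the right-hand side is non-negative, giving $\Delta \tilde P(X) \le \Delta P(X)$.

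For the ordering claim, I would substitute the PGPU assumption $\rho_+(X) = f(\Delta P(X))$ into the identity and introduce
\begin{equation*}
g(t) := (1 - f(t))(t + 1) - 1,
\end{equation*}
so that $\Delta \tilde P(X) = g(\Delta P(X))$. Both equivalences then reduce to showing that $g:[-1,1]\to\mathbb{R}$ is strictly increasing. I would split the argument at $t=0$. On $[-1,0]$, property~\eqref{eq:prop_2} gives $f(t)=0$, so $g(t)=t$, which is strictly increasing. On $(0,1]$, differentiation yields
\begin{equation*}
g'(t) = (1 - f(t)) - f'(t)(t+1),
\end{equation*}
with $1 - f(t) > 0$ since $f$ is a probability, $-f'(t) > 0$ by~\eqref{eq:prop_3}, and $t+1 > 0$; so both summands are strictly positive and $g$ is strictly increasing on $(0,1]$.

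The main obstacle is gluing the two pieces across $t=0$: a purely piecewise argument permits a downward jump at the origin, which would let a point just above the boundary carry a smaller $\Delta \tilde P$ than a point just below and would break the ordering claim. I would close this gap by invoking continuity of the mislabelling rate at the decision boundary, i.e.\ $\lim_{t\downarrow 0} f(t)=0$, consistent with the PGPU intuition that points infinitesimally close to the boundary from the positive side are essentially as hard to label as points on the boundary itself. Then $g(0^+)=g(0)=0$ and $g$ is continuous and strictly increasing on all of $[-1,1]$. Once strict monotonicity of $g$ is established, both equivalences in the statement are immediate: the equality case is injectivity of $g$, and the inequality case is its order-preservation.
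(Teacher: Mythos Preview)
Your approach is the same as the paper's: both properties are read off identity~\eqref{eq:observe_true_probs}. For the first inequality, your rearrangement $\Delta P(X) - \Delta \tilde P(X) = \rho_+(X)\,(\Delta P(X)+1) \ge 0$ is in fact cleaner than the paper's argument, which writes $\Delta \tilde P(X) = (1-\rho_+(X))\Delta P(X) - \rho_+(X)$ and infers the bound from $1-\rho_+(X) < 1$ without noting that this last step tacitly needs $\Delta P(X) \ge 0$. For the ordering property, the paper merely asserts in one line that $\Delta \tilde P(X)$ is monotone in $\Delta P(X)$ by reference to the identity; your introduction of $g(t)=(1-f(t))(t+1)-1$, the derivative computation on $(0,1]$, and the case split at $t=0$ supply precisely the details the paper omits. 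The gluing issue you flag at $t=0$ is real and is not addressed by the paper's stated assumptions or its proof; your remedy $\lim_{t\downarrow 0} f(t)=0$ is the natural additional hypothesis, and without something like it the ordering claim can indeed fail across the boundary.
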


The proof is given by \cite{bekker2018learning}. We recall it here with slight modifications to make this paper completed.

\begin{proof}
From eq. (\ref{eq:observe_true_probs}),
\begin{align}
\Delta \tilde{P} (X) = & (1 - \rho_+(X)) (\Delta P(X) + 1) - 1 \nonumber\\
= & (1 - \rho_+(X)) \Delta P(X) + 1 - \rho_+(X) - 1 \nonumber\\
= & (1 - \rho_+(X)) \Delta P(X) - \rho_+(X).
\end{align}
Because
\begin{equation}
\rho_+(X) = P(\tilde Y = -1 | X, Y = +1) \in (0, 1),
\end{equation}
we have that
\begin{gather}
1 - \rho_+(X) < 1.
\end{gather}
Therefore,
\begin{align}
\Delta \tilde{P} (X) = (1 - \rho_+(X)) \Delta P(X) - \rho_+(X) < \Delta P(X).
\end{align}

Additionally, because $\Delta \tilde P(X)$ monotonically decreases as $\Delta P(X)$ increases (see eq. \ref{eq:observe_true_probs}), we can directly have Property 2.
\end{proof}

Based on this theorem, we can obtain a relabeling method.

\begin{theorem}
	Define that
\begin{equation}
l = - \rho_+ (X) |_{\Delta P(X) = 0} \in (-1, 0).
\end{equation}
The unlabelled instances in the area $\Delta \tilde{P}(X) \in [-1, l]$ can be labelled as negative points. Meanwhile, The unlabelled instances in the area $\Delta \tilde{P}(X) \in (l, 1]$ can be labelled as positive. All new labels are identical to the ones that the Bayesian optimal classifier assigns.
\end{theorem}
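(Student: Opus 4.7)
The plan is to translate the Bayes decision rule ``output $+1$ iff $\Delta P(X) > 0$'' into an equivalent rule on the observable quantity $\Delta \tilde P(X)$ by (i) locating the image of the Bayes threshold $\Delta P(X)=0$ under the deterministic map relating $\Delta \tilde P$ to $\Delta P$, and then (ii) using the order preservation from Theorem~1 to transfer half-line inequalities across this map. No new analytic content is needed beyond equation (\ref{eq:observe_true_probs}) and Lemma~1.

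Concretely, I would substitute $\Delta P(X)=0$ into the identity
\[
\Delta \tilde P(X) = (1 - \rho_+(X))\,\Delta P(X) - \rho_+(X),
\]
established inside the proof of Theorem~1, to obtain $\Delta \tilde P(X) = -\rho_+(X)|_{\Delta P(X)=0} = l$. The PGPU assumption $\rho(X,Y) = f(\Delta P(X))$ ensures that $\rho_+(X)|_{\Delta P(X)=0}$ is the single constant $f(0) \in (0,1)$, so $l \in (-1,0)$ is well defined. I would then apply Property~2 of Theorem~1 (eq.~\ref{eq:equal_gap}) with one of the two arguments lying on the level set $\{\Delta P = 0\}$ to get the equivalences $\Delta P(X) \le 0 \Leftrightarrow \Delta \tilde P(X) \le l$ and $\Delta P(X) > 0 \Leftrightarrow \Delta \tilde P(X) > l$. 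Combining these with Lemma~1 (equivalently eq.~\ref{eq:bayes}), the Bayes optimal label is $-1$ on $\{\Delta \tilde P \in [-1,l]\}$ and $+1$ on $\{\Delta \tilde P \in (l,1]\}$, which is precisely the claim.

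The main obstacle is a bookkeeping one: ensuring that the threshold $l$ is a scalar rather than an $X$-dependent quantity. This is exactly what the PGPU functional form $\rho_+ = f(\Delta P)$ buys us, since restricting to the level set $\{\Delta P(X)=0\}$ pins $\rho_+(X)$ to the single value $f(0)$; without this assumption the proof breaks. A secondary subtlety is the boundary set $\{\Delta P(X)=0\}$, where eq.~(\ref{eq:bayes}) permits a random choice between $\pm 1$; absorbing this level set into the negative side via the closed endpoint at $l$ is consistent with one valid realisation of the Bayes optimal classifier, so the theorem follows immediately from the two equivalences above with no further calculation.
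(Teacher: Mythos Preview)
Your proposal is correct and follows the same approach as the paper: evaluate the identity $\Delta \tilde P(X) = (1-\rho_+(X))\Delta P(X) - \rho_+(X)$ at $\Delta P(X)=0$ to locate the threshold $l$, then invoke the order preservation of Theorem~1 to transfer the Bayes decision boundary. Your write-up is in fact more careful than the paper's three-line proof, since you explicitly justify why $l$ is a scalar (via the PGPU assumption $\rho_+ = f(\Delta P)$, so $\rho_+|_{\Delta P=0}=f(0)$) and you handle the boundary set $\{\Delta P(X)=0\}$, both of which the paper leaves implicit.
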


\begin{proof}
When $\Delta P(X) = 0$, $\Delta \tilde P(X) = l$. Therefore, when $\Delta \tilde P(X) > l$, $\Delta P(X) > 0$. Otherwise, $\Delta P(X) < 0$.
\end{proof}

However, due to the restrictions of sampling methods to approximate probabilities, we propose a weaker version as follows. Both versions are rigorously correct under the assumptions of function $f$ (see eqs. \ref{eq:prop_1}, \ref{eq:prop_2}, and \ref{eq:prop_3}).

\begin{corollary}
\label{cor:optimal_relabel}
	The unlabelled instances in the area $\Delta \tilde{P}(X) \in [-1, l]$ can be labelled as negative points. Meanwhile, The unlabelled instances in the area $\Delta \tilde{P}(X) \in (0, 1]$ can be labelled as positive. All new labels are identical to the ones that the Bayesian optimal classifier assigns.
\end{corollary}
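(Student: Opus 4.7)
The plan is to prove each of the two relabelling rules separately by invoking Theorem 1, together with the identification (already made in the proof of Theorem 2) that $\Delta P(X) = 0$ corresponds to $\Delta \tilde P(X) = l$. Since the Bayesian optimal classifier (eq.~\ref{eq:bayes}) assigns the label $+1$ when $\Delta P(X) > 0$ and $-1$ when $\Delta P(X) < 0$, it suffices to translate each hypothesis on $\Delta \tilde P(X)$ into the corresponding statement on $\Delta P(X)$.

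For the negative region $\Delta \tilde P(X) \in [-1, l]$, I would use the ordering property in Theorem 1: because $\Delta \tilde P$ is monotone non-decreasing in $\Delta P$ and the two quantities coincide at the value pair $(0, l)$, the inequality $\Delta \tilde P(X) \le l$ forces $\Delta P(X) \le 0$, so the Bayes-optimal label is $-1$. For the positive region $\Delta \tilde P(X) \in (0, 1]$, the argument is even cleaner: the first part of Theorem 1 gives $\Delta \tilde P(X) \le \Delta P(X)$ pointwise, so $\Delta \tilde P(X) > 0$ immediately implies $\Delta P(X) > 0$, and the Bayes-optimal label is $+1$. This explains why the corollary uses asymmetric thresholds $l$ and $0$: on the positive side we get a one-sided buffer for free from the inequality $\Delta \tilde P \le \Delta P$, whereas on the negative side the sharper value $l$ is actually needed. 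Instances with $\Delta \tilde P(X) \in (l, 0]$ are simply left unlabelled, providing a deliberate safety margin against the sampling error incurred when $\Delta \tilde P$ is estimated from finite data.

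I do not expect any real obstacle here, as both implications reduce to one-line applications of Theorem 1. The only points that require a bit of attention are the boundary cases: verifying that the closed endpoint $\Delta \tilde P(X) = l$ translates to $\Delta P(X) \le 0$ (consistent with the tie-breaking in eq.~\ref{eq:bayes}), and noting that, although the inequality in Theorem 1 was stated strictly under $\rho_+(X) \in (0,1)$, the corollary's claim only uses the non-strict version at the boundary, which follows by continuity of the affine relation derived in eq.~(\ref{eq:observe_true_probs}).
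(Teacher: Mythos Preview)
Your proposal is correct and follows essentially the paper's approach: the paper presents Corollary~\ref{cor:optimal_relabel} as an immediate weakening of Theorem~2 (since $(0,1]\subset(l,1]$ when $l<0$), and its proof of Theorem~2 is precisely the monotonicity argument you give for the negative region. Your use of the pointwise inequality $\Delta\tilde P(X)\le\Delta P(X)$ from Theorem~1 to handle the positive region directly is a clean variant that bypasses Theorem~2 altogether, but the content is the same.
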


Based on Corollary \ref{cor:optimal_relabel}, we can formally state a {\it Bayesian optimal relabelling method} to label a group of unlabelled examples. Firstly, we employ an SVM-based method to estimate the conditional probability $P(\tilde Y|X)$. Platt et al. develop this technique based on the standard SVM to provide the calibrated posterior probabilities of the examples corresponding to the potential classes \cite{platt1999probabilistic}. This work hires a sigmoid function to map the outputs of the standard SVM to the posterior probabilities. Existing main SVM tool libraries usually have included this technique. In this paper, we use the LIBSVM toolkit \cite{chang2011libsvm} to estimate the probabilities mentioned above. Secondly, we calculate the empirical probabilistic gap by definition. Afterwards, the boundary $l$ can be estimated by $2$ estimators. The first one is as the following equation
\begin{equation}
	\hat l_1 = \inf_{y = +1} \Delta \tilde{P}(x).
\end{equation}
This estimator is straight-forward from the Theorem 1. However, this estimator relies on the minimization of the $\Delta \tilde{P}(x)$, which can be vulnerable to the potential outliers. To deal with this problem, we calculate the mean of the $n'$ smallest $\Delta \tilde{P}(x)$ to estimate the $l$ in practice. The second way to estimate the boundary $l$ is by cross-validation. This estimation method could be more robust. We search the optimal value of the boundary $l$ in a potential area according to the best performance of the whole classification algorithm during the validation. Finally, we can label the instances in the areas $\Delta P(X) \in (-1, l)$ and $\Delta P(X) \in (0, 1)$ respectively as positive and negative. The new full-labelled dataset is denoted as $\hat S$. Here, we suppose the dataset $\hat S = \{ (x'_i, \tilde y'_i), i = 1, \ldots, N \}$ satisfies a distribution $\hat D$. We will learn a classifier on this dataset.

\section{Learn a Classifier from PU Data}

Standard classification algorithms minimize empirical risk $\hat{R}(h, S, l)$ on training data $S$ trying to minimize the expected risk $R(h, D, l)$ (a small expected risk implies a small test error). This method is based on the fact that the expectation of the empirical risk equals to the expected risk, i.e.,
\begin{align}
& E_{S \sim D^{|S|}} \hat{R}(h, S, l) = E_{S \sim D^{|S|}} \frac{1}{|S|} \sum_{(x, y) \in S} l(h(x), y) \nonumber \\
= & E_{(X, Y) \in D} l(h(X), Y) = R(h, D, l),
\end{align}
where $l$ is the loss function, and $D$ is the distribution of $(X, Y)$. Therefore, optimizing the empirical risk is actually optimizing the expected risk when the sample size is large enough.

The result mentioned above is based on an assumption that the probabilistic distributions of the training set and the test set are coincident, which however does not hold in our case -- even the supports are not the same. This is because that the unlabelled examples in a sub-domain of the data, which satisfy $\Delta \tilde{P} \in [l, 0]$, can never be labelled by our proposed algorithm.

To handle this issue, Huang et al. propose an importance reweighting technique to modify the standard empirical risk to be asymptotically consistent to the risk \cite{huang2007correcting}. The following development is also mentioned by Cheng et al. \cite{cheng2017learning}.

Here, we make an important assumption that the conditional probability $P(Y|X)$ remains the same while the probability $P(X)$ differs. In other words, the bias in data domain may not affect the mapping from the instance to the label. We thus have the following equation
\begin{align}
R(h, D, l) & = E_{(X, Y) \sim D} l(h(X), Y) \nonumber \\
& = E_{(X, Y) \sim \hat D} \frac{P_{D}(x, y)}{P_{\hat D}(x, y)} l(h(X), Y) \nonumber \\
& = R(h, \hat D, \frac{P_{D}(x, y)}{P_{\hat D}(x, y)} l) \nonumber \\
& =  E_{\hat S \sim \hat D^{|\hat S|}} \hat{R}(h, \hat S, \frac{P_{D}(x, y)}{P_{\hat D}(x, y)} l) \nonumber \\
& =  E_{\hat S \sim \hat D^{|\hat S|}} \hat{R}(h, \hat S, \frac{P_{D}(x)}{P_{\hat D}(x)} l) \nonumber \\
& \triangleq E_{\hat S \sim \hat D^{|\hat S|}} \hat{R}(h, \hat S, \beta(x) l),
\end{align}
where $\beta(x) = \frac{P_{D}(x)}{P_{\hat D}(x)}$.

However, the $P_D(x)$ is not accessible. Thus, obtaining the $\beta(x)$ could be an issue. Here, we employ the kernel mean matching (KMM) technique to obtain the $\beta(x)$ via an optimization process. For simplicity, we write the selected data as $\{X_{\text{selected}}\}$ in the sample domain $\mathcal{X}$.

Suppose there is a feature mapping $\Phi: \mathcal{X} \to \mathcal{H}$, where $\mathcal{H}$ is a reproducing kernel Hilbert space (RKHS) introduced by a universal kernel $k(X, X') = <\Phi(X), \Phi(X')>$. Applying Theorem 1 in \cite{huang2007correcting}, we can get the following result
\begin{gather}
E_{X \sim D} \Phi(X) = E_{X \sim \hat{D}} \beta(X) \Phi(X) \nonumber \\
\Leftrightarrow P_D(X) = \beta(X) P_{\hat D}.
\end{gather}
Therefore, we can obtain the $\beta(X)$ by solving the following optimization problem:
\begin{gather}
	\min_{\beta(X)} \| E_{X \sim P_D(X)} \Phi(X) -  E_{X \sim \hat{D}} \beta(X) \Phi(X) \|, \nonumber \\
    \text{s.t. } \beta(X) > 0 \text{ and }  E_{X \sim \hat{D}} \beta(X) = 1.
\end{gather}
However, both $P_D(x)$ and $P_{\hat D}(x)$ are not available. Therefore, we use their empirical counterparts to estimate them. Then we get the final optimization problem.
\begin{gather}
	\min_{\beta} \|\frac{1}{n} \sum_{i = 1}^n \Phi(x_i) -  \frac{1}{n'} \sum_{i = 1}^{n'} \beta_i \Phi(x'_i) \|, \nonumber \\
    \text{s.t. } \beta_i \in [0, \beta] \text{ and }  | \frac{a}{n'} \sum_{i = 1}^{n'} \beta_i - 1 | \le \epsilon,
\end{gather}
where $\beta = (\beta_1, \ldots, \beta_{n'})$ and $\epsilon$ is a small real constant.

Now, we can finally summarize our algorithm in Algorithm 1.

\begin{algorithm}
    \normalsize
	\SetAlgoLined
	\KwIn{PU sample $\tilde{D} = \{(x_1, s_1), \ldots, (x_n, s_n)\}$}
	\KwOut{Classifier $\hat{h}$}
	\textbf{Step 1. conditional Bayesian optimal relabelling}\;
	Estimate probabilities $\tilde{P}_+(x)$ and $\tilde{P}_-(x)$ of instances assigned in particular classes\;
	$\Delta \tilde{P}(x) = \tilde{P}_+(x) - \tilde{P}_-(x)$\;
	Find the lower threshold $\Delta \tilde{P}(x)_L = inf_{y = +1} \Delta \tilde{P}(x)_L$\;
	Select instances satisfy $\Delta \tilde{P}(x) > 0$ or $\Delta \tilde{P}(x) < \Delta \tilde{P}(x)_L$\;
	\textbf{Step 2. Learn a classifier}\;
	Estimate $P_D(X)$ and $P_{D^*}(X)$\;
	$\beta(x) = \frac{P_{D}(x)}{P_{D^*}(x)}$\;
	Run weighted SVM on selected data with weights $\beta$\;
	\Return{The classifier learned by the weighted SVM.}
	\caption{Probabilistic-gap PU Classification algorithm}
\end{algorithm}

\section{Theoretical Analysis}

In this section, we theoretically analyze our proposed method. In the beginning, we analyze our method under two restrictions: (1) the examples relabelled by the Bayesian optimal relabelling process lie in the same support with the original PU data; and (2) the supports of the latent clean positive and negative data are non-overlapped. Furthermore, we discuss the general circumstances without any restriction.


The main results of our method with the two restrictions are Theorem 3 and 4.

\begin{theorem}
	Assume the supports of the latent clean positive and negative data are non-overlapped. Suppose the original PU data satisfies a distribution $D$ while the data collected by the proposed Bayesian optimal relabelling process satisfies another distribution $\hat D$. Assume the distributions $D$ and $\hat{D}$ have the same support. Then, the Bayesian optimal classifier $h^*_{D}$ on the distribution $D$ coincides with $h^*_{\hat{D}}$ on the distribution $\hat{D}$.
\end{theorem}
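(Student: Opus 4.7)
The plan is to reduce the theorem to two observations whose combination forces the conditional label distribution under $\hat{D}$ to be a point mass matching the Bayes optimal classifier of $D$. First, the non-overlap assumption makes $P_D(Y=1\mid X)\in\{0,1\}$ on the support of $D$, so by Lemma 1 the classifier $h^*_D$ is deterministic and takes a single value on each of the two disjoint supports. Second, Corollary \ref{cor:optimal_relabel} guarantees that every example placed into $\hat{S}$ receives a new label coinciding with the Bayes optimal label that $D$ would assign at the same $X$. Together these will pin down $P_{\hat{D}}(Y'\mid X)$ exactly.

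Concretely, I would first let $A_+=\{X:P_D(Y=1\mid X)=1\}$ and $A_-=\{X:P_D(Y=-1\mid X)=1\}$. By the non-overlap hypothesis, $\mathrm{supp}(D)\subseteq A_+\cup A_-$ with $A_+\cap A_-=\emptyset$, and Lemma 1 gives $h^*_D(X)=+1$ on $A_+$ and $h^*_D(X)=-1$ on $A_-$. Next, I would invoke the relabelling rule: since the rule is a deterministic function of $X$ alone (through $\Delta\tilde{P}(X)$), no single $X$ can contribute both labels to $\hat{S}$, and by Corollary \ref{cor:optimal_relabel} whatever label it contributes agrees with $h^*_D(X)$. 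This yields $P_{\hat{D}}(Y'=h^*_D(X)\mid X)=1$ for every $X\in\mathrm{supp}(\hat{D})$.

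Finally, I would close the argument using the hypothesis $\mathrm{supp}(D)=\mathrm{supp}(\hat{D})$: every $X$ in the common support belongs to $A_+$ or $A_-$, so applying Lemma 1 to $\hat{D}$ gives
\begin{equation*}
h^*_{\hat{D}}(X)=\mathrm{sign}\!\left(P_{\hat{D}}(Y'=1\mid X)-\tfrac{1}{2}\right)=h^*_D(X),
\end{equation*}
which is the desired equality.

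The main obstacle is not algebraic but conceptual: one must verify that Corollary \ref{cor:optimal_relabel} really determines $P_{\hat{D}}(Y'\mid X)$ rather than merely the marginal class frequencies. This is handled by observing that the relabelling is a pointwise deterministic function of $X$, so the induced conditional is a Dirac mass. A subsidiary subtlety is that the relabelling is silent on the strip $\Delta\tilde{P}(X)\in(l,0]$; the same-support assumption in the statement is precisely what lets us avoid this gap, since any $X$ not relabelled simply does not appear in $\mathrm{supp}(\hat{D})$, and by hypothesis such $X$ do not appear in $\mathrm{supp}(D)$ either, so the conclusion is vacuous on that region.
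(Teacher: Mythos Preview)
Your proposal is correct and follows essentially the same route as the paper: both apply Lemma~1 after arguing that $P_{\hat D}(Y\mid X)$ is a point mass at $h^*_D(X)$, which the paper records in one line as $P_{\hat D}(Y=+1\mid X_i)=\mathbb{I}[h^*_D(X_i)=+1]$. Your version is simply more explicit---invoking Corollary~\ref{cor:optimal_relabel} for the determinism of the relabelled conditional and spelling out how the same-support hypothesis handles the unrelabelled strip---whereas the paper compresses this justification into a single appeal to the non-overlap assumption.
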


\begin{proof}
	Any instance $X_i$ in the support $\text{supp}(P_D(X))$ of the distribution $D$ also lies in the support of the distribution $\hat{D}$, i.e., $X_i \in \text{supp}(P_{\hat{D}}(X))$. Additionally, applying Lemma 1, the theoretical optimal classifier $h^*_{\hat{D}}$ on the distribution $\hat{D}$ satisfies that
    \begin{align}
    	h^*_{\hat{D}}(X_i) = & \text{sign} (P_{\hat{D}}(Y = +1 | X_i) - 1/2) \nonumber \\
        = & \text{sign} (\mathbb{I}[h^*_D(X_i) = +1] - 1/2) \nonumber \\
        = & h^*_D(X_i).
    \end{align}
The second equation is guaranteed by the assumption that the supports of the latent clean positive and negative data are non-overlapped.

    The proof is completed.
\end{proof}

Based on Theorem 3, we can obtain the following theorem.

\begin{theorem}
	Suppose $l$ is a classification-calibrated loss function while the conditions in Theorem 3 still hold. Assume that $l \in [0, b]$. Then for any $\delta \in (0, 1)$, with a probability at least $1 - \delta$, we have
    \begin{align}
    	& |R(\hat{h}_{\hat{D}}, \hat{D}, l) - R(h^*_{\hat{D}}, \hat{D}, l)| \nonumber \\
        \le & 2 \mathfrak{R}(l \circ \mathcal{H}) + 2b \sqrt{\frac{\log(1/\delta)}{2n}},
    \end{align}
	Here, $\mathfrak{R}$ is the Rademacher complexity which is defined as
	\begin{equation}
		\mathfrak{R}(H) = \mathbb E_{\sigma, X, \hat{Y}} \left[\sup_{h \in \mathcal{H}}\frac{2}{n}\sum_{i=1}^{n} \sigma_i l(h(X_i), \tilde Y_i)\right],
	\end{equation}
	where $\sigma = (\sigma_1, \ldots, \sigma_n)$ and $\sigma_i, i = 1, \ldots, n$ are Rademacher variables i.i.d. drawn from a symmetric distribution on $\{-1, 1\}$. Additionally,
	\begin{equation}
		l \circ H = \{l \circ h | h \in H \}.
	\end{equation}
\end{theorem}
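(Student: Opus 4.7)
The plan is to prove Theorem 4 by the standard three-step Rademacher-complexity route: first reduce the excess risk to twice a uniform deviation using the ERM property of $\hat{h}_{\hat{D}}$, then concentrate the uniform deviation around its expectation via McDiarmid's inequality, and finally bound that expectation by $\mathfrak{R}(l \circ \mathcal{H})$ through ghost-sample symmetrization.

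For the first step, since $\hat{h}_{\hat{D}} = \arg\min_{h \in \mathcal{H}} \hat{R}(h, \hat{S}, l)$, the middle bracket in the decomposition
\begin{align}
R(\hat{h}_{\hat{D}}, \hat{D}, l) - R(h^*_{\hat{D}}, \hat{D}, l)
= & \ [R(\hat{h}_{\hat{D}}, \hat{D}, l) - \hat{R}(\hat{h}_{\hat{D}}, \hat{S}, l)] \nonumber\\
& + [\hat{R}(\hat{h}_{\hat{D}}, \hat{S}, l) - \hat{R}(h^*_{\hat{D}}, \hat{S}, l)] \nonumber\\
& + [\hat{R}(h^*_{\hat{D}}, \hat{S}, l) - R(h^*_{\hat{D}}, \hat{D}, l)]
\end{align}
is non-positive, so the whole expression is bounded above by $2 \sup_{h \in \mathcal{H}} |R(h, \hat{D}, l) - \hat{R}(h, \hat{S}, l)|$. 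This delivers the factor of $2$ in the statement.

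For the remaining steps, set $\Phi(\hat{S}) = \sup_{h \in \mathcal{H}} |R(h, \hat{D}, l) - \hat{R}(h, \hat{S}, l)|$. Since $l \in [0, b]$, swapping a single example in $\hat{S}$ changes each empirical risk by at most $b/n$, so $\Phi$ satisfies the bounded-differences condition with constants $b/n$, and McDiarmid's inequality gives
\begin{equation}
\Phi(\hat{S}) \le \mathbb{E}_{\hat{S}}[\Phi(\hat{S})] + b \sqrt{\frac{\log(1/\delta)}{2n}}
\end{equation}
with probability at least $1-\delta$. The expectation is then controlled by the usual ghost-sample symmetrization: introduce an i.i.d.\ copy $\hat{S}'$, apply Jensen's inequality to move the supremum outside the $\hat{S}'$-expectation, and insert Rademacher variables to swap paired examples. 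Because the paper's $\mathfrak{R}$ already carries a $2/n$ prefactor rather than the more common $1/n$, this standard computation yields $\mathbb{E}_{\hat{S}}[\Phi(\hat{S})] \le \mathfrak{R}(l \circ \mathcal{H})$. Combining with the factor $2$ from the first step produces the claimed inequality.

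The argument is routine; the main point requiring care is interpretive. One must ensure that the sample $\hat{S}$ appearing in both McDiarmid and the symmetrization is an i.i.d.\ draw of size $n$ from the relabelled distribution $\hat{D}$, so that the $n$ in the deviation term matches the effective sample size used to fit $\hat{h}_{\hat{D}}$. Note also that classification-calibration of $l$ is not actually used in this surrogate-loss inequality itself; it becomes relevant only when translating the surrogate excess-risk guarantee into a $0$-$1$ excess-risk guarantee. Finally, the shared-support assumption inherited from Theorem 3 is what identifies $h^*_{\hat{D}}$ with $h^*_D$, so the bound derived here for $\hat{D}$ transfers to a meaningful guarantee on the original PU problem on $D$.
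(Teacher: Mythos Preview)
Your proposal is correct and follows essentially the same route as the paper. The paper's own proof is terse: it cites Lemma 3 (Anthony--Bartlett), which is exactly your first-step ERM decomposition giving the factor $2$, and Lemma 2 (the Rademacher generalization bound from Mohri et al.), then declares Theorem 4 ``straight-forward''; you have simply unpacked Lemma 2 into its standard McDiarmid-plus-symmetrization ingredients rather than invoking it as a black box. Your remark that classification-calibration is not actually used in this inequality is also well taken.
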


To obtain Theorem 4, we need a theorem in \cite{mohri2012foundations} and a lemma proved by Anthony and Bartlett \cite{anthony2009neural}. 

\begin{lemma}(Theorem 3.2 in \cite{mohri2012foundations})
	Let $H$ be a class of classifiers $h$. Suppose that the data satisfies a distribution $D$, the training sample is $S$, and the training sample size is $m$. Then for any $h \in H$, we have the following formulation for the expected risk $R(h, D, l)$ and the empirical risk $\hat{R}(h, S, l)$ under a loss function $l$
    \begin{equation}
    	R(h,D,l) \le \hat{R}(h, S, l) + \mathfrak{R}(l \circ H) + \sqrt{\frac{\log \frac{1}{\delta}}{2m}}.
    \end{equation}
\end{lemma}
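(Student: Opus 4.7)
The target is an excess-risk bound for the ERM predictor $\hat h_{\hat D}$ with respect to the Bayes-optimal $h^*_{\hat D}$ on the relabelled distribution $\hat D$. Because $\hat h_{\hat D}$ minimises the empirical risk over $\mathcal H$, and because the classification-calibration assumption together with the support condition of Theorem 3 puts $h^*_{\hat D}$ inside (or within the closure of) $\mathcal H$, the left-hand side of the theorem is the non-negative excess risk and only the upper direction has to be established.

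I would expand the excess risk via the standard three-bracket ERM decomposition, inserting and subtracting the empirical risks evaluated on the relabelled sample $\hat S$ of size $n$:
\begin{align*}
R(\hat h_{\hat D},\hat D,l)-R(h^*_{\hat D},\hat D,l)
&=\bigl[R(\hat h_{\hat D},\hat D,l)-\hat R(\hat h_{\hat D},\hat S,l)\bigr]\\
&\quad+\bigl[\hat R(\hat h_{\hat D},\hat S,l)-\hat R(h^*_{\hat D},\hat S,l)\bigr]\\
&\quad+\bigl[\hat R(h^*_{\hat D},\hat S,l)-R(h^*_{\hat D},\hat D,l)\bigr].
\end{align*}
The middle bracket is $\le 0$ by the ERM property of $\hat h_{\hat D}$, so the task reduces to controlling the two outer brackets.

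The first bracket falls directly to Lemma 2 on $\hat D$, instantiated with failure probability $\delta/2$ and specialised at $h=\hat h_{\hat D}$; with the $b$ factor arising from applying McDiarmid to the bounded loss $l\in[0,b]$, it is at most $\mathfrak{R}(l\circ\mathcal H)+b\sqrt{\log(2/\delta)/(2n)}$. For the third bracket the sign is flipped, so I would apply the symmetric version of Lemma 2 (the same ghost-sample and McDiarmid argument carried out on $\hat R-R$ instead of $R-\hat R$), again with failure probability $\delta/2$, evaluated at $h=h^*_{\hat D}$; it contributes a matching $\mathfrak{R}(l\circ\mathcal H)+b\sqrt{\log(2/\delta)/(2n)}$. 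A union bound over these two events gives, with probability $\ge 1-\delta$, the summed estimate $2\mathfrak{R}(l\circ\mathcal H)+2b\sqrt{\log(2/\delta)/(2n)}$; absorbing $\log 2$ into a relabelling of $\delta$ yields the theorem's $\log(1/\delta)$ form.

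The hard part is the reverse-direction bound on the third bracket, since Lemma 2 as recorded in the excerpt is one-sided ($R\le\hat R+\cdots$). I would close this either by invoking the symmetric counterpart of Lemma 2 (a standard Rademacher-generalisation fact, though not explicitly stated in the paper) or, as a fallback, by applying Hoeffding's inequality to the fixed function $h^*_{\hat D}$, which would actually tighten the bound by deleting one Rademacher term. The stated constants $2\mathfrak{R}+2b\sqrt{\cdot}$ match the symmetric-Lemma-2 route, so that is the argument I expect to use.
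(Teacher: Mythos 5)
Your proposal does not prove the statement it was supposed to prove. The statement in question is Lemma~2 itself --- the generic one-sided Rademacher generalization bound $R(h,D,l) \le \hat{R}(h,S,l) + \mathfrak{R}(l \circ H) + \sqrt{\log(1/\delta)/(2m)}$, holding uniformly over $h \in H$ with probability $1-\delta$ --- which the paper simply imports from Theorem~3.2 of Mohri et al.\ without proof. What you have written is instead an argument for Theorem~4, the excess-risk bound for the ERM classifier on the relabelled distribution $\hat{D}$: you set up the three-bracket ERM decomposition, discard the middle bracket by the ERM property, and then control the two outer brackets by invoking Lemma~2 (and its symmetric counterpart). Since your argument uses Lemma~2 as an ingredient, it cannot stand as a proof of Lemma~2; relative to the target statement it is circular, and the actual content of the lemma is never established.

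A proof of the stated lemma would be the standard uniform-convergence argument: define $\Phi(S) = \sup_{h \in H}\bigl(R(h,D,l) - \hat{R}(h,S,l)\bigr)$, verify the bounded-differences property (replacing one sample point changes $\Phi$ by at most $1/m$ for a loss in $[0,1]$, or $b/m$ in general), apply McDiarmid's inequality to obtain $\Phi(S) \le \mathbb{E}[\Phi(S)] + \sqrt{\log(1/\delta)/(2m)}$ with probability at least $1-\delta$, and then bound $\mathbb{E}[\Phi(S)]$ by ghost-sample symmetrization with Rademacher variables, which yields $\mathbb{E}[\Phi(S)] \le \mathfrak{R}(l \circ H)$ under the paper's normalization of $\mathfrak{R}$ (the factor $2/n$ is built into its definition). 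None of these steps --- bounded differences, McDiarmid, symmetrization --- appears in your proposal. As a side remark, your three-bracket decomposition is a legitimate route to Theorem~4 and is close to what the paper does there (the paper instead combines Lemma~2 with Lemma~3, the Anthony--Bartlett bound $|R(h^*,D,l) - R(\hat{h},D,l)| \le 2\sup_{h \in H}|R(h,D,l) - \hat{R}(h,S,l)|$), but that is a different statement from the one you were asked to prove.
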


Lemma 2 provides a generalization bound via Rademarcher complexity.
    
The lemma by Anthony and Bartlett is as follows. 
\begin{lemma}
Suppose the loss function $l$ is classification-calibrated. Denote the learned classifier as $\hat h$. Then we have
  \begin{align}
  	& |R(h^*, D, l) - R(\hat h, D, l)| \nonumber \\
    \le & 2 \sup_{h \in H} |R(h, D, l) - \hat R(h, S, l)|.
  \end{align}
\end{lemma}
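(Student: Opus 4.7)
The plan is to prove the lemma by the classical excess-risk decomposition used for empirical risk minimization. By their definitions, $\hat h = \arg\min_{h \in H} \hat R(h, S, l)$ and $h^* = \arg\min_{h \in H} R(h, D, l)$, so $R(h^*, D, l) \le R(\hat h, D, l)$ and $\hat R(\hat h, S, l) \le \hat R(h^*, S, l)$. Consequently $|R(h^*, D, l) - R(\hat h, D, l)| = R(\hat h, D, l) - R(h^*, D, l)$, and it suffices to upper bound this nonnegative quantity.

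Next I would insert an "add and subtract" telescoping identity that interposes the empirical risks of $\hat h$ and $h^*$, namely
\begin{align}
R(\hat h, D, l) - R(h^*, D, l) = & \bigl[R(\hat h, D, l) - \hat R(\hat h, S, l)\bigr] \nonumber\\
& + \bigl[\hat R(\hat h, S, l) - \hat R(h^*, S, l)\bigr] \nonumber\\
& + \bigl[\hat R(h^*, S, l) - R(h^*, D, l)\bigr].
\end{align}
The middle bracket is non-positive because $\hat h$ is an empirical risk minimizer, so it can simply be dropped. Each of the remaining two brackets has the form $R(h, D, l) - \hat R(h, S, l)$ (or its negative) evaluated at a particular $h \in H$, and is therefore bounded in absolute value by $\sup_{h \in H} |R(h, D, l) - \hat R(h, S, l)|$. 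Summing these two contributions yields exactly the factor of two in the stated bound.

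There is no genuinely hard step here; the only subtlety worth flagging is bookkeeping around the absolute values, since one must take the supremum of the unsigned deviation so that both the $\hat h$ term and the $h^*$ term can be absorbed into the same quantity. The hypothesis that $l$ is classification-calibrated plays no role in the decomposition itself; it is invoked elsewhere (together with Lemma 2) to translate the surrogate excess risk bounded here into the $0$--$1$ excess risk used to obtain Theorem 4. Combining the three pieces above completes the proof.
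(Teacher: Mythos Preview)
Your argument is correct and is exactly the standard excess-risk decomposition for ERM; the paper does not give its own proof of this lemma but simply cites it from Anthony and Bartlett \cite{anthony2009neural}, so there is nothing to compare against. Your remark that the classification-calibrated assumption is not used in the inequality itself is also accurate.
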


Lemma 3 demonstrates that if the excess risk $|R(h, D, l) - \hat R(h, S, l)|$ is small enough, which means that the algorithm generalizes well, the expected risk of the classifier $\hat h$ is consistent to the one of the theoretical optimal classifier $h^*$. Applying Lemma 2 and 3, Theorem 4 is straight-forward.

Theorem 3 and 4 guarantee the feasibility of our method under two restrictions mentioned above. However, these two restrictions do not always hold in general circumstances: (1) the Bayesian optimal realbelling process leads to the domain bias that the support of the relabelled data is different from the one of the original PU data; and (2) the supports of the latent clean positive and negative data are probably overlapped. Our method hires a reweighting technique, KMM, to address these issue. In the rest of this section, we aim to analyze our method without any restriction. The analysis mainly focuses on the Bayesian optimal relabelling process and the reweighting technique. The main result is Theorem 5.

\begin{theorem}
	Suppose the conditional probability of observed noisy labels given features is $P(\tilde Y |X)$ and the relabelled data size is $N$. Assume $\beta(X, \tilde{Y})l(f(X), \tilde Y)$ is controlled by an upper bound $b$. Suppose $h^*_{D, l}$ is the theoretical optimal classifier under $D$ with loss $l$, while $\hat h_{\hat{D}, \beta l}$ is the classifier learned by ERM under $\hat{D}$ with loss $\beta l$. Then, for any real-valued constant $\delta \in (0, 1)$, with probability at least $1 - \delta$, we have
	\begin{align}
		& \left|R(\hat h_{\hat{D}, \beta l}, \hat{D}, \beta l) - R(h^*_{D, l}, D, l) \right| \nonumber \\
        \le & 2 \max_{(X, \hat{Y})} \beta(X, \tilde Y) \mathfrak{R}(l \circ H) + 2 b \sqrt{\frac{1 - \delta}{2N}}.
	\end{align}
\end{theorem}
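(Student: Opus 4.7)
The plan is to reduce the stated bound to a standard Rademacher-type generalization estimate on the relabelled sample by exploiting the importance-reweighting identity derived in Section 6, and then to absorb the instance-dependent weights $\beta$ into the Rademacher complexity by a contraction argument. The target inequality compares two different worlds, $(\hat h_{\hat D, \beta l}, \hat D, \beta l)$ and $(h^*_{D, l}, D, l)$, so the first move is to rewrite the gap so that everything lives in the same world.

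First I would reuse the chain of equalities from Section 6 to establish the pointwise identity $R(h, D, l) = R(h, \hat D, \beta l)$ for every $h \in H$. Taking infima over $H$ on both sides yields $R(h^*_{D, l}, D, l) = R(h^*_{\hat D, \beta l}, \hat D, \beta l)$, so the left-hand side of the theorem collapses to the excess risk of the ERM on the weighted problem:
\begin{equation*}
|R(\hat h_{\hat D, \beta l}, \hat D, \beta l) - R(h^*_{D, l}, D, l)| = |R(\hat h_{\hat D, \beta l}, \hat D, \beta l) - R(h^*_{\hat D, \beta l}, \hat D, \beta l)|.
\end{equation*}
Applying Lemma 3 with loss $\beta l$ bounds the right-hand side by $2 \sup_{h \in H}|R(h, \hat D, \beta l) - \hat R(h, \hat S, \beta l)|$, and applying Lemma 2 with loss $\beta l \in [0, b]$ on the $N$-point relabelled sample produces $2\,\mathfrak{R}((\beta l) \circ H) + 2 b \sqrt{\log(1/\delta)/(2N)}$, whose square-root term matches the expression in the statement up to what appears to be a typographical swap of $\log(1/\delta)$ for $1-\delta$.

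The main obstacle is the final step: bounding $\mathfrak{R}((\beta l) \circ H)$ by $\beta_{\max}\,\mathfrak{R}(l \circ H)$, where $\beta_{\max} := \max_{(X, \tilde Y)} \beta(X, \tilde Y)$. This is not immediate, because the weights $\beta(X_i, \tilde Y_i)$ are data-dependent and cannot simply be factored out of the supremum inside the Rademacher expectation. I would handle this by invoking Talagrand's contraction principle coordinatewise: for each fixed $(X_i, \tilde Y_i)$, the map $z \mapsto \beta(X_i, \tilde Y_i)\,z$ is $\beta(X_i, \tilde Y_i)$-Lipschitz and hence globally $\beta_{\max}$-Lipschitz, which allows the constant $\beta_{\max}$ to be extracted from $E_\sigma \sup_{h \in H} \frac{2}{N} \sum_{i=1}^N \sigma_i \beta(X_i, \tilde Y_i) l(h(X_i), \tilde Y_i)$. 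Plugging this contraction bound back into the chain of estimates above recovers exactly the inequality claimed.
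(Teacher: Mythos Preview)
Your argument is correct and its overall skeleton---rewrite the gap via the reweighting identity, apply Lemma~3, then bound the uniform deviation---matches the paper's. The only difference is in how the uniform deviation is controlled: the paper does not go through Lemma~2 plus Talagrand contraction, but simply invokes Proposition~1 (a result imported from Liu and Tao, \cite{liu2016classification}) which already packages the bound $\sup_{h}|R(h,D,l)-\hat R(h,\hat S,\beta l)|\le \beta_{\max}\,\mathfrak R(l\circ H)+b\sqrt{(1-\delta)/(2N)}$ with the $\beta_{\max}$ factor built in. Your route is a self-contained reconstruction of that proposition from more primitive tools, and your observation that the $1-\delta$ under the square root should be $\log(1/\delta)$ is well taken---the paper inherits this expression from Proposition~1 as stated, so the typo propagates.
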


To obtain Theorem 5, we need a proposition originally proved by Liu and Tao (see \cite{liu2016classification}). Here, we provide a more general proposition slightly extended from the original version by Liu and Tao. This proposition guarantees the generalization of our algorithm.

\begin{proposition}
	Suppose the conditional probability of observed noisy labels given features is $P(\tilde Y |X)$. Assume $\beta(X, \tilde{Y})l(f(X), \tilde Y)$ is controlled by an upper bound $b$. For any real-valued constant $\delta \in (0, 1)$, with probability at least $1 - \delta$, we have
	\begin{align}
		& \sup_{h \in H} \left|R(h, D, l) - \hat{R}(h, S, \beta l) \right| \nonumber \\
        \le & \max_{(X, \tilde{Y})} \beta(X, \tilde Y) \mathfrak{R}(l \circ H) + b \sqrt{\frac{1 - \delta}{2n}}.
	\end{align}
\end{proposition}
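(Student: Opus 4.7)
The plan is to combine the importance-weighting identity derived in Section~6 with the classical McDiarmid-plus-symmetrization argument underlying Lemma~2, applied to the reweighted loss
\begin{equation*}
g_{h}(x,\tilde y)\;:=\;\beta(x,\tilde y)\,l(h(x),\tilde y).
\end{equation*}
The identity $\mathbb{E}_{(X,\tilde Y)\sim \hat D}[g_{h}(X,\tilde Y)] = R(h,D,l)$ that motivated KMM makes $\hat R(h,S,\beta l)=\tfrac{1}{n}\sum_{i}g_{h}(x_{i},\tilde y_{i})$ an unbiased estimator of the true risk $R(h,D,l)$ whenever $S\sim\hat D^{\,n}$. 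The statement is therefore a uniform deviation bound for the bounded class $\{g_{h}:h\in H\}$.

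First I would apply McDiarmid's inequality to $\Phi(S):=\sup_{h\in H}|R(h,D,l)-\hat R(h,S,\beta l)|$. Since $0\le g_{h}\le b$ by hypothesis, replacing a single coordinate of $S$ changes $\Phi$ by at most $b/n$, so with probability at least $1-\delta$,
\begin{equation*}
\Phi(S)\;\le\;\mathbb{E}[\Phi(S)] + b\sqrt{\tfrac{\log(1/\delta)}{2n}},
\end{equation*}
which accounts for the concentration term in the proposition.

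Next I would dominate $\mathbb{E}[\Phi(S)]$ by a Rademacher complexity. A ghost-sample symmetrization, together with the paper's convention that $\mathfrak{R}$ already absorbs the factor $2/n$, yields $\mathbb{E}[\Phi(S)]\le \mathbb{E}\,\mathfrak{R}(g\circ H)$. Finally I would detach the weights: each map $z\mapsto \beta(x_{i},\tilde y_{i})\,z$ sends $0$ to $0$ and is Lipschitz with constant at most $\beta_{\max}:=\max_{(X,\tilde Y)}\beta(X,\tilde Y)$, so by the Ledoux--Talagrand contraction lemma in its per-coordinate form,
\begin{equation*}
\mathfrak{R}(g\circ H)\;\le\;\beta_{\max}\,\mathfrak{R}(l\circ H).
\end{equation*}
Assembling these three ingredients gives the proposition.

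The main obstacle is this last, weight-stripping step: because the multipliers $\beta(x_{i},\tilde y_{i})$ depend on the sample rather than being a single scalar, they cannot be pulled outside the supremum by linearity alone. Invoking the contraction lemma with variable per-sample Lipschitz constants (or equivalently rescaling by $\beta_{\max}$ before contracting) is the cleanest way to obtain the worst-case factor $\beta_{\max}$ without incurring extra constants. Everything else is a routine transcription of the proof of Lemma~2 with the loss $l$ replaced by the bounded, sample-dependent loss $\beta l$.
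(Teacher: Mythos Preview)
The paper does not actually supply a proof of this proposition: it attributes the result to Liu and Tao and merely states a ``slightly extended'' version without argument, then invokes it together with Lemma~3 to obtain Theorem~5. Your sketch---McDiarmid applied to the bounded weighted losses $g_{h}=\beta\,l$, followed by ghost-sample symmetrization and a per-coordinate Ledoux--Talagrand contraction to strip the sample-dependent multipliers at the cost of the worst-case factor $\beta_{\max}$---is precisely the standard argument in that reference, so your route is the intended one and is correct.

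One point worth flagging: your McDiarmid step produces the usual concentration term $b\sqrt{\log(1/\delta)/(2n)}$, whereas the proposition as printed has $b\sqrt{(1-\delta)/(2n)}$. The latter is almost certainly a typographical slip in the paper (compare Theorem~4 and Lemma~2, which carry the correct $\log(1/\delta)$); your derivation gives the right expression, and you should not try to reproduce the printed term.
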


Theorem 5 is straightforward from the Proposition 1 and Lemma 3.

\section{Empirical Results}

We conduct experiments on generated synthetic datasets, UCI benchmark datasets \footnote{The UCI benchmark datasets can be obtained from \url{http://theoval.cmp.uea.ac.uk/matlab/}.}, and a real-world dataset TCDB \footnote{The dataset TCDB can be obtained from \url{http://cseweb.ucsd.edu/~elkan/posonly/}.}.

In all experiments, we estimate the boundary $l$ by two methods. The first one is to calculate the mean of $n'$ smallest $\Delta \tilde{P}(x)$; the second way is to find the $l$ by cross-validation. The value of $n’$ will be specified later. These $2$ methods are respectively denoted as PGPU and PGPUcv. For cross-validation, we divide the training sets into $5$ folds, and search for boundary $l$ from $-0.9$ to $-0.6$ with the step of $0.01$.

We compare our algorithm's performance with five baselines: SVM on PU data (denoted as SVM), \cite{elkan2008learning}'s method (denoted as Elkan), \cite{natarajan2013learning}'s second method (C-SVM) (denoted as Natarajan), \cite{liu2016classification}'s method (denoted as Liu), and SVM on clean data (denoted as clean). The results prove the feasibility of our method.

\begin{figure*}[h]
\centering
\subfigure[Clean Data]{\includegraphics[width=1.85in]{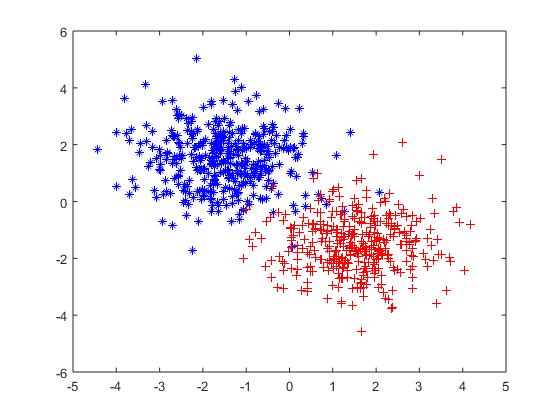}%
\label{fig_first_case}}
\hfil
\subfigure[PU Data]{\includegraphics[width=1.85in]{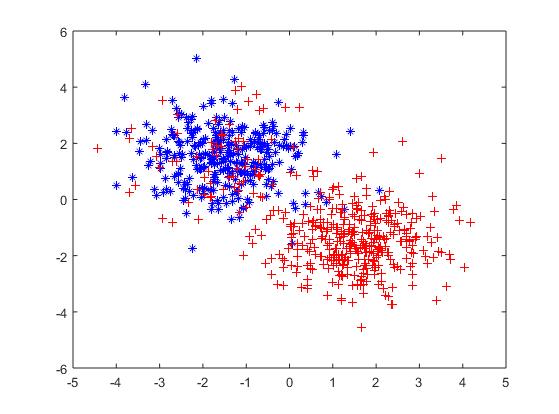}%
\label{fig_second_case}}
\hfil
\subfigure[Labelled Data]{\includegraphics[width=1.85in]{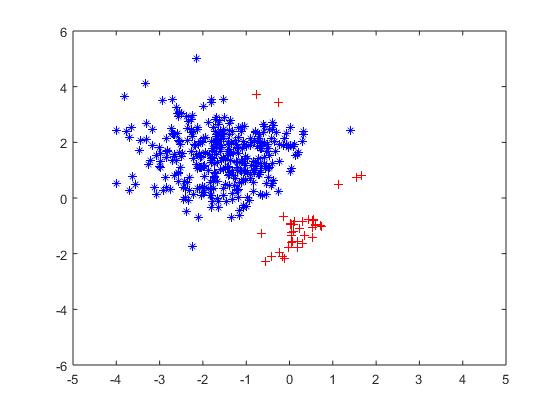}%
\label{fig_second_case}}
\caption{Illustrations of clean data, PU data, and labelled data via Bayesian optimal relabelling process. Red points are positive while blue ones are negative.}
\label{fig_sim}
\end{figure*}

\subsection{Simulations on Synthetic Data}

To start with, we generate 2-dimensional non-overlapping binary-class datasets to evaluate our algorithm. Positive and negative examples are sampled uniformly from $2$ triangles whose vertices are respectively $\{(-1, -1), (-1, 1), (1, 1)\}$ and $\{(-1, -1), (1, 1), (1, -1)\}$. There are $1000$ positive points and $1000$ negative points in the datasets. Then the conditional probabilities $P(Y|X)$ are estimated by \cite{platt1999probabilistic}'s method. Probabilistic gap $\Delta P(X)$ is further obtained. To generate the PU data, we then randomly flip positive labels to negative via $17$ settings of mislabelled rate:
\begin{itemize}
\item $9$ inverse settings:
\begin{gather}
	\rho(X) = \frac{\alpha}{(\alpha + \Delta P(X)(1 + \beta)}, \nonumber \\
 	\alpha = 0.1, 0.2, 0.3, \text{ } \beta = 0.5, 1.0, 1.5.
\end{gather}
    
\item $5$ linear settings:
\begin{gather}
	\rho(X) = \alpha (1 - \Delta P(X)), \nonumber \\
    \alpha = 0.2, 0.4, 0.6, 0.8, 1.0.
\end{gather}
    
\item $3$ constant settings:
\begin{gather}
	\rho(X) = 0.1, 0.2, 0.3.
\end{gather}
\end{itemize}

We further run all algorithm on the datasets. Each dataset was randomly split $10$ times, $75\%$ for training and $25\%$ for test. To estimate the boundary $l$, we use the mean of $n' = 3$ smallest $\Delta \tilde{P}(x)$. The results are as Table 1-3.

\begin{table*}[h!]
	\caption{Mean and Standard Deviation of Classification Accuracies of all Methods on Non-overlapping Generated Synthetic Dataset with Inverse mislabelled rate.}
	\label{sample-table}
    \tiny
	\centering
    \vbox{}
	\begin{tabular}{llllllll}
		\hline
		$(\alpha, \beta)$& SVM			& Elkan			& Natarajan		& Liu			& PGPU			& PGPUcv			&clean \\
		\hline
		$(0.1, 0.5)$	& $92.00\pm0.42$& $91.72\pm2.92$& $91.96\pm0.35$& $92.00\pm0.42$& $\bm{95.36}\pm\bm{1.48}$& $92.60\pm1.65$& $97.20$ \\
		$(0.1, 1.0)$	& $93.60\pm1.12$& $93.28\pm7.37$& $93.12\pm1.30$& $93.84\pm1.45$& $\bm{95.12}\pm\bm{1.43}$& $93.84\pm1.27$& $97.60$ \\
		\hline
		$(0.2, 0.5)$	& $92.64\pm0.78$& $91.44\pm5.02$& $91.84\pm0.89$& $93.24\pm0.84$& $\bm{94.44}\pm\bm{1.45}$& $92.88\pm0.92$& $99.20$ \\
		$(0.2, 1.0)$	& $94.28\pm0.80$& $94.48\pm1.01$& $93.04\pm0.66$& $94.60\pm0.63$& $\bm{96.64}\pm\bm{1.83}$& $94.64\pm1.04$& $98.40$ \\
		\hline
		$(0.3, 0.5)$	& $90.36\pm1.07$& $90.08\pm8.18$& $89.76\pm0.85$& $90.04\pm0.97$& $\bm{94.08}\pm\bm{2.62}$& $91.04\pm1.68$& $99.60$ \\
		$(0.3, 1.0)$	& $91.36\pm0.76$& $91.20\pm5.67$& $90.04\pm0.95$& $91.88\pm0.84$& $\bm{92.68}\pm\bm{2.00}$& $91.96\pm2.36$& $98.40$ \\
		\hline
	\end{tabular}
\end{table*}

\begin{table*}[h!]
	\caption{Mean and Standard Deviation of Classification Accuracies of all Methods on Non-overlapping Generated Synthetic Dataset with Linear mislabelled rate.}
	\label{sample-table}
    \tiny
	\centering
    \vbox{}
	\begin{tabular}{llllllll}
		\hline
		$\alpha$& SVM			& Elkan			& Natarajan		& Liu			& PGPU			& PGPUcv			&clean \\
		\hline
		$0.2$	& $95.84\pm0.84$& $96.40\pm0.82$& $95.92\pm1.05$& $96.32\pm1.05$& $\bm{97.44}\pm\bm{1.27}$& $96.16\pm1.10$& $98.80$ \\
		$0.4$	& $93.20\pm0.60$& $93.88\pm0.57$& $93.56\pm0.58$& $94.00\pm0.53$& $\bm{94.72}\pm\bm{4.86}$& $93.36\pm0.69$& $98.40$ \\
		$0.6$	& $91.40\pm1.58$& $91.52\pm1.03$& $90.96\pm1.15$& $91.32\pm1.31$& $91.24\pm0.95$& $\bm{91.64}\pm\bm{1.84}$& $99.20$ \\
		$0.8$	& $91.44\pm0.39$& $91.88\pm0.63$& $91.40\pm0.28$& $91.80\pm0.74$& $\bm{93.48}\pm\bm{1.70}$& $91.80\pm1.12$& $98.80$ \\
		$1.0$	& $91.40\pm1.58$& $91.44\pm0.97$& $90.96\pm1.15$& $91.48\pm1.35$& $\bm{91.72}\pm\bm{1.31}$& $91.56\pm1.72$& $99.20$ \\
		\hline
	\end{tabular}
\end{table*}

\begin{table*}[h!]
	\caption{Mean and Standard Deviation of Classification Accuracies of all Methods on Non-overlapping Generated Synthetic Dataset with Constant mislabelled rate.}
	\label{sample-table}
    \tiny
	\centering
    \vbox{}
	\begin{tabular}{llllllll}
		\hline
		$\alpha$& SVM			& Elkan			& Natarajan		& Liu			& PGPU			& PGPUcv			&clean \\
		\hline
		$0.1$	& $93.88\pm0.42$& $94.28\pm6.40$& $93.48\pm0.35$& $\bm{94.36}\pm\bm{0.57}$& $93.56\pm1.51$& $94.28\pm1.40$& $97.20$ \\
		$0.2$	& $94.52\pm0.60$& $\bm{97.84}\pm\bm{0.74}$& $94.16\pm0.74$& $96.92\pm1.02$& $95.27\pm1.67$& $94.84\pm1.01$& $98.80$ \\
		$0.3$	& $92.12\pm0.27$& $92.56\pm0.57$& $91.92\pm0.37$& $92.24\pm0.39$& $\bm{93.04}\pm\bm{0.95}$& $92.70\pm1.99$& $98.80$ \\
		\hline
	\end{tabular}
\end{table*}

Then we evaluate our algorithms on overlapping synthetic datasets. We generated the datasets in 2 steps. First, we uniformly sample $2000$ examples in the square with the vertices $(-1, -1), (-1, 1), (1, 1), (1, -1)$. Here, we use $(X^1, X^2)$ to denote the instances. Then, we label the examples as positive by the probability of $\max\{0, 0.5 - 10(X^1 - X^2)\}$. The percentage of examples that lie in the overlapping area $\ge 2.5\%$. Finally, we run all algorithm on the datasets. Each dataset was randomly split $10$ times, $75\%$ for training and $25\%$ for test. We still use the mean of $n' = 3$ smallest $\Delta \tilde{P}(x)$ to estimate the boundary $l$. The quantitative results are as as Table 4-6.

\begin{table*}[h]
	\caption{Mean and Standard Deviation of Classification Accuracies of all Methods on Overlapping Generated Synthetic Dataset with Inverse mislabelled rate.}
	\label{sample-table}
    \tiny
	\centering
    \vbox{}
	\begin{tabular}{lllllllll}
		\hline
		$\alpha$& $\beta$& SVM			& Elkan			& Natarajan		& Liu			& PGPU			& PGPUcv			&clean \\
		\hline
		$0.1$	& $0.5$	& $95.86\pm0.90$& $95.62\pm0.61$& $95.52\pm0.78$& $95.90\pm0.90$& $\bm{97.16}\pm\bm{0.36}$& $95.86\pm0.90$& $98.00$ \\
		$0.1$	& $1.0$	& $95.44\pm0.23$& $95.08\pm0.73$& $95.18\pm0.41$& $95.38\pm0.20$& $\bm{95.92}\pm\bm{0.48}$& $95.60\pm0.54$& $97.20$ \\
		\hline
		$0.2$	& $0.5$	& $95.70\pm0.60$& $94.44\pm3.48$& $95.52\pm0.53$& $95.82\pm0.48$& $\bm{96.04}\pm\bm{0.74}$& $95.60\pm0.58$& $97.00$ \\
		$0.2$	& $1.0$	& $93.58\pm0.52$& $94.22\pm1.20$& $92.90\pm0.34$& $93.74\pm0.42$& $\bm{95.68}\pm\bm{0.84}$& $93.98\pm1.09$& $97.60$ \\
		\hline
		$0.3$	& $0.5$	& $91.86\pm0.84$& $92.74\pm4.25$& $91.94\pm0.88$& $93.40\pm1.05$& $\bm{94.42}\pm\bm{0.49}$& $92.02\pm0.94$& $98.40$ \\
		$0.3$	& $1.0$	& $92.12\pm0.30$& $93.16\pm0.89$& $92.12\pm0.33$& $92.78\pm0.48$& $\bm{94.22}\pm\bm{1.09}$& $92.32\pm0.84$& $97.40$ \\
		\hline
	\end{tabular}
\end{table*}

\begin{table*}[h]
	\caption{Mean and Standard Deviation of Classification Accuracies of all Methods on Overlapping Generated Synthetic Dataset with Linear mislabelled rate.}
	\label{sample-table}
    \tiny
	\centering
    \vbox{}
	\begin{tabular}{llllllll}
		\hline
		$\alpha$& SVM			& Elkan			& Natarajan		& Liu			& PGPU			& PGPUcv			&clean \\
		\hline
		$0.2$	& $97.38\pm0.48$& $97.86\pm0.31$& $97.50\pm0.45$& $97.32\pm0.52$& $\bm{97.98}\pm\bm{0.53}$& $97.42\pm0.57$& $98.40$ \\
		$0.4$	& $95.88\pm0.69$& $96.52\pm0.56$& $95.18\pm0.86$& $96.04\pm0.74$& $\bm{97.34}\pm\bm{0.35}$& $96.12\pm0.95$& $98.00$ \\
		$0.6$	& $92.66\pm0.74$& $93.24\pm0.36$& $92.76\pm0.57$& $92.66\pm0.60$& $\bm{94.28}\pm\bm{0.77}$& $92.80\pm0.95$& $96.60$ \\
		$0.8$	& $91.26\pm0.42$& $91.84\pm0.66$& $91.02\pm0.48$& $91.32\pm0.40$& $\bm{92.60}\pm\bm{0.54}$& $91.42\pm0.58$& $97.60$ \\
		$1.0$	& $88.94\pm0.87$& $89.78\pm0.92$& $88.68\pm0.77$& $89.18\pm0.68$& $\bm{92.14}\pm\bm{0.94}$& $89.26\pm1.40$& $96.40$ \\
		\hline
	\end{tabular}
\end{table*}

\begin{table*}[h]
	\caption{Mean and Standard Deviation of Classification Accuracies of all Methods on Overlapping Generated Synthetic Dataset with Constant mislabelled rate.}
	\label{sample-table}
    \tiny
	\centering
    \vbox{}
	\begin{tabular}{llllllll}
		\hline
		$\alpha$& SVM			& Elkan			& Natarajan		& Liu			& PGPU			& PGPUcv			&clean \\
		\hline
		$0.1$	& $96.30\pm0.24$& $94.36\pm5.80$& $96.30\pm0.22$& $96.42\pm0.29$& $\bm{96.76}\pm\bm{0.53}$& $96.30\pm0.24$& $97.40$ \\
		$0.2$	& $94.64\pm0.63$& $93.70\pm10.30$& $94.30\pm0.71$& $\bm{96.80}\pm\bm{0.55}$& $95.58\pm0.80$& $95.00\pm1.23$& $98.00$ \\
		$0.3$	& $89.46\pm0.61$& $\bm{96.08}\pm\bm{0.33}$& $89.10\pm0.80$& $93.58\pm1.62$& $89.42\pm1.06$& $89.94\pm1.80$& $96.40$ \\
		\hline
	\end{tabular}
\end{table*}

\subsection{UCI Benchmark Dataset}

We also evaluate our methods on generated datasets based on UCI benchmarks, i.e., splice, banana, twonorm, image, and Heart. They respectively have $2,991$, $5,300$, $7.400$, $2,086$, and $270$ examples. We flip positive labels to negative via $9$ settings of mislabelled rate:
\begin{itemize}
\item $9$ inverse settings:
\begin{gather}
	\rho(X) = \frac{\alpha}{(\alpha + \Delta P(X)(1 + \beta)}, \nonumber \\
 	\alpha = 0.1, 0.2, 0.3, \text{ } \beta = 0.5, 1.0, 1.5.
\end{gather}
\end{itemize}.

We further run all algorithm on the datasets. Each dataset was randomly split $10$ times, $75\%$ for training and $25\%$ for test. Here, we use the minimization of $\Delta \tilde{P}(x)$ to estimate the boundary $l$. The quantitative results are shown in Table 7. As the size of the Heart dataset is too small (there are only $270$ instances), the cross-validation is not suitable. So we do not conduct experiments for the method PGPUcv for the Heart dataset.

We can see that the proposed PGPUcv mostly outperforms the baselines, empirically showing the advantages of the proposed method. PGPU works worse than PGPUcv may because the data in the UCI benchmarks is noisy or of small size, making it difficulty to estimate the conditional probabilities. 

\begin{table*}[h]
\caption{Mean and Standard Deviation of Classification Accuracies of all Methods on UCI Dataset with Inverse mislabelled rate (Whitenned).}
\label{sample-table}
\tiny
\centering
\vbox{}
\begin{tabular}{lllllllll}
	\hline
	Dataset	& $(\alpha, \beta)$			& SVM			& Elkan			& Natarajan		& Liu			& PGPU			& PGPUcv			&clean \\
	\hline
	Splice	& $(0.1, 0.5)$	& $56.91\pm1.69$& $55.09\pm7.10$& $56.53\pm1.66$& $58.11\pm2.35$& $48.71\pm3.28$& $\bm{59.15}\pm\bm{1.18}$	& $66.98$ \\
	Splice	& $(0.1, 1.0)$	& $56.62\pm1.39$& $53.11\pm6.11$& $56.00\pm1.31$& $57.65\pm1.65$& $50.31\pm6.31$& $\bm{58.80}\pm\bm{1.76}$	& $67.89$ \\
	Splice	& $(0.1, 1.5)$	& $56.61\pm0.97$& $50.64\pm7.88$& $55.92\pm1.22$& $57.80\pm1.26$& $51.36\pm4.72$& $\bm{57.84}\pm\bm{4.13}$	& $67.49$ \\
	Splice	& $(0.2, 0.5)$	& $56.72\pm1.22$& $50.40\pm7.38$& $56.23\pm1.43$& $57.49\pm2.18$& $50.08\pm5.40$& $\bm{59.46}\pm\bm{2.08}$	& $66.87$ \\
	Splice	& $(0.2, 1.0)$	& $56.03\pm1.46$& $47.28\pm5.21$& $55.04\pm0.73$& $\bm{57.32}\pm\bm{3.92}$& $51.55\pm7.17$& $57.15\pm3.20$	& $67.05$ \\
	Splice	& $(0.2, 1.5)$	& $55.19\pm0.65$& $47.61\pm7.30$& $54.77\pm0.71$& $54.42\pm1.08$& $52.95\pm5.17$& $\bm{59.20}\pm\bm{2.42}$	& $66.09$ \\
	Splice	& $(0.3, 0.5)$	& $55.74\pm1.38$& $52.19\pm12.1$& $55.41\pm1.45$& $56.33\pm7.77$& $52.62\pm6.12$& $\bm{56.98}\pm\bm{1.69}$	& $67.32$ \\
	Splice	& $(0.3, 1.0)$	& $77.17\pm2.78$& $69.80\pm2.33$& $75.69\pm3.11$& $79.91\pm1.67$& $80.07\pm2.41$& $\bm{80.21}\pm\bm{0.87}$	& $95.79$ \\
	Splice	& $(0.3, 1.5)$	& $81.44\pm2.18$& $68.77\pm3.23$& $80.07\pm2.32$& $78.42\pm0.81$& $81.48\pm3.07$& $\bm{83.64}\pm\bm{0.87}$	& $95.79$ \\
	\hline
	Banana	& $(0.1, 0.5)$	& $57.51\pm1.26$& $55.34\pm6.30$& $57.15\pm1.50$& $\bm{58.16}\pm\bm{1.36}$& $49.80\pm3.73$& $58.05\pm1.91$	& $68.71$ \\
	Banana	& $(0.1, 1.0)$	& $55.95\pm1.48$& $47.11\pm6.15$& $55.37\pm1.32$& $\bm{57.07}\pm\bm{1.79}$& $54.42\pm6.49$& $57.06\pm4.15$	& $66.83$ \\
	Banana	& $(0.1, 1.5)$	& $54.91\pm1.24$& $52.97\pm5.86$& $54.47\pm1.00$& $\bm{56.23}\pm\bm{1.53}$& $51.80\pm7.97$& $55.92\pm2.63$	& $66.05$ \\
	Banana	& $(0.2, 0.5)$	& $56.72\pm1.38$& $51.55\pm6.03$& $56.21\pm1.43$& $57.19\pm2.05$& $48.39\pm5.62$& $\bm{58.50}\pm\bm{1.45}$	& $67.73$ \\
	Banana	& $(0.2, 1.0)$	& $56.25\pm2.57$& $48.04\pm6.61$& $55.43\pm2.60$& $56.70\pm3.25$& $49.76\pm5.41$& $\bm{57.38}\pm\bm{4.59}$	& $65.83$ \\
	Banana	& $(0.2, 1.5)$	& $56.10\pm3.14$& $47.05\pm6.32$& $55.84\pm3.12$& $52.78\pm4.81$& $51.56\pm4.42$& $\bm{58.26}\pm\bm{4.78}$	& $66.45$ \\
	Banana	& $(0.3, 0.5)$	& $54.91\pm1.40$& $47.34\pm6.06$& $54.50\pm1.10$& $54.20\pm2.59$& $54.32\pm4.18$& $\bm{57.66}\pm\bm{2.56}$	& $66.06$ \\
	Banana	& $(0.3, 1.0)$	& $55.33\pm1.49$& $\bm{62.31}\pm\bm{14.5}$& $55.21\pm1.32$& $60.22\pm9.09$& $55.25\pm1.64$& $56.13\pm3.12$	& $66.12$ \\
	\hline
	Twonorm	& $(0.1, 0.5)$	& $97.52\pm0.35$& $97.47\pm0.51$& $97.48\pm0.33$& $97.64\pm0.36$& $96.96\pm0.50$& $\bm{97.65}\pm\bm{0.36}$	& $97.92$ \\
	Twonorm	& $(0.1, 1.0)$	& $97.42\pm0.23$& $97.53\pm0.47$& $97.43\pm0.23$& $97.48\pm0.27$& $96.99\pm0.25$& $\bm{97.56}\pm\bm{0.25}$	& $97.88$ \\
	Twonorm	& $(0.1, 1.5)$	& $97.54\pm0.27$& $96.88\pm1.10$& $97.55\pm0.27$& $97.56\pm0.25$& $96.86\pm0.34$& $\bm{97.59}\pm\bm{0.30}$	& $97.72$ \\
	Twonorm	& $(0.2, 0.5)$	& $96.51\pm0.46$& $96.23\pm2.34$& $96.41\pm0.49$& $96.70\pm0.39$& $\bm{97.15}\pm\bm{0.37}$& $97.11\pm0.42$	& $97.85$ \\
	Twonorm	& $(0.2, 1.0)$	& $96.91\pm0.38$& $96.26\pm1.46$& $96.85\pm0.37$& $97.10\pm0.34$& $96.97\pm0.49$& $\bm{97.24}\pm\bm{0.27}$	& $97.75$ \\
	Twonorm	& $(0.2, 1.5)$	& $97.23\pm0.27$& $96.86\pm1.43$& $97.22\pm0.31$& $\bm{97.28}\pm\bm{0.27}$& $97.02\pm0.39$& $\bm{97.28}\pm\bm{0.38}$	& $97.83$ \\
	Twonorm	& $(0.3, 0.5)$	& $94.43\pm0.51$& $95.40\pm2.05$& $94.39\pm0.53$& $95.77\pm0.45$& $\bm{97.21}\pm\bm{0.37}$& $96.05\pm0.79$	& $97.92$ \\
	Twonorm	& $(0.3, 1.0)$	& $96.16\pm0.64$& $96.98\pm1.30$& $96.20\pm0.63$& $96.61\pm0.59$& $\bm{97.16}\pm\bm{0.25}$& $96.85\pm0.61$	& $97.91$ \\
	Twonorm	& $(0.3, 1.5)$	& $96.59\pm0.36$& $96.74\pm1.09$& $96.54\pm0.46$& $96.86\pm0.33$& $96.86\pm0.33$& $\bm{97.11}\pm\bm{0.41}$	& $97.86$ \\
	\hline
	Image	& $(0.1, 0.5)$	& $75.57\pm5.96$& $63.07\pm6.15$& $74.33\pm5.32$& $67.57\pm5.14$& $68.45\pm6.82$& $\bm{76.00}\pm\bm{6.66}$	& $88.31$ \\
	Image	& $(0.1, 1.0)$	& $79.56\pm6.98$& $68.60\pm9.99$& $78.14\pm6.46$& $64.92\pm5.94$& $73.26\pm6.72$& $\bm{78.43}\pm\bm{7.01}$	& $88.33$ \\
	Image	& $(0.1, 1.5)$	& $\bm{80.42}\pm\bm{6.99}$& $69.08\pm6.93$& $78.89\pm6.68$& $67.84\pm1.65$& $72.49\pm8.30$& $77.95\pm7.61$	& $88.24$ \\
	Image	& $(0.2, 0.5)$	& $75.54\pm5.84$& $64.44\pm6.57$& $73.87\pm5.24$& $66.23\pm9.23$& $69.81\pm8.31$& $\bm{76.72}\pm\bm{5.72}$	& $87.70$ \\
	Image	& $(0.2, 1.0)$	& $71.92\pm4.03$& $68.47\pm4.90$& $\bm{70.90}\pm\bm{2.84}$& $69.12\pm2.03$& $64.44\pm3.97$& $70.82\pm2.17$	& $88.03$ \\
	Image	& $(0.2, 1.5)$	& $\bm{73.30}\pm\bm{6.81}$& $67.30\pm5.87$& $72.55\pm6.03$& $72.15\pm6.46$& $68.35\pm5.87$& $71.46\pm6.76$	& $88.45$ \\
	Image	& $(0.3, 0.5)$	& $64.75\pm4.55$& $65.90\pm4.02$& $63.26\pm4.89$& $\bm{68.08}\pm\bm{3.31}$& $65.24\pm3.58$& $67.49\pm2.81$	& $87.57$ \\
	Image	& $(0.3, 1.0)$	& $71.74\pm5.67$& $68.16\pm8.01$& $70.79\pm4.46$& $71.53\pm5.95$& $65.98\pm6.20$& $\bm{71.84}\pm\bm{4.91}$	& $88.58$ \\
	Image	& $(0.3, 1.5)$	& $72.72\pm2.73$& $66.17\pm6.15$& $71.69\pm2.60$& $70.42\pm2.43$& $67.57\pm1.94$& $\bm{72.76}\pm\bm{2.55}$	& $87.99$ \\
	\hline
	Heart	& $(0.1, 0.5)$	& $79.90\pm0.35$& $75.98\pm0.51$& $79.41\pm0.33$& $\bm{79.90}\pm\bm{0.36}$& $77.94\pm0.50$& $\text{NA}$	& $81.37$ \\
	Heart	& $(0.1, 1.0)$	& $\bm{83.82}\pm\bm{0.23}$& $81.86\pm0.47$& $83.33\pm0.23$& $81.37\pm0.27$& $76.96\pm0.25$& $\text{NA}$	& $86.27$ \\
	Heart	& $(0.1, 1.5)$	& $83.33\pm0.27$& $78.43\pm1.10$& $83.33\pm0.27$& $\bm{83.82}\pm\bm{0.25}$& $80.88\pm0.34$& $\text{NA}$	& $82.35$ \\
	Heart	& $(0.2, 0.5)$	& $78.92\pm0.46$& $76.96\pm2.34$& $79.90\pm0.49$& $\bm{80.88}\pm\bm{0.39}$& $78.92\pm0.37$& $\text{NA}$	& $81.86$ \\
	Heart	& $(0.2, 1.0)$	& $76.47\pm0.38$& $75.49\pm1.46$& $76.96\pm0.37$& $\bm{79.41}\pm\bm{0.34}$& $78.92\pm0.49$& $\text{NA}$	& $82.84$ \\
	Heart	& $(0.2, 1.5)$	& $76.96\pm0.27$& $\bm{82.35}\pm\bm{1.43}$& $77.45\pm0.31$& $80.88\pm0.27$& $76.92\pm0.39$& $\text{NA}$	& $80.88$ \\
	Heart	& $(0.3, 0.5)$	& $70.59\pm0.51$& $81.37\pm2.05$& $64.71\pm0.53$& $\bm{81.86}\pm\bm{0.45}$& $69.12\pm0.37$& $\text{NA}$	& $83.33$ \\
	Heart	& $(0.3, 1.0)$	& $77.45\pm0.64$& $\bm{83.82}\pm\bm{1.30}$& $77.94\pm0.63$& $83.33\pm0.59$& $79.41\pm0.25$& $\text{NA}$	& $83.33$ \\
	Heart	& $(0.3, 1.5)$	& $78.43\pm0.36$& $\bm{80.88}\pm\bm{1.09}$& $78.43\pm0.46$& $79.41\pm0.33$& $\bm{80.88}\pm\bm{0.33}$& $\text{NA}$	& $82.84$ \\
	\hline
\end{tabular}
\end{table*}

\subsection{Experiment on Real-world Data}

Elkan and Noto release a real-world document dataset\footnote{This dataset can be obtained from the website http://www.cs.ucsd.edu/users/elkan/posonly.} \cite{elkan2008learning} based on a biological database SwissProt. The positive dataset $P$ contains 2453 examples which are obtained from a dataset TCDB \cite{saier2006tcdb}. Meanwhile, there are 4906 records in the unlabelled dataset $U$, which are randomly selected from SwissProt excluding those in the TCDB. In other words, those 2 datasets $P$ and $U$ are disjoint. Furthermore, Das et al. also manually label the unlabelled dataset \cite{das2007finding}. They identify 348 positive members from the whole dataset $N$, and name the new positive examples as $Q$ and the leftover is $N = U - Q$.

Here is an example in the TCDB. It is labeled as positive.
\newline
\begin{displayquote}
AC   Q57JA3; \\
p-hydroxybenzoic acid efflux pump subunit aaeA (pHBA efflux pump protein A). \\
"The genome sequence of Salmonella enterica serovar Choleraesuis, a highly invasive and resistant zoonotic pathogen."; \\
-!- SUBCELLULAR LOCATION: Cell inner membrane; single-pass membrane \\
    protein (Potential). \\
-!- MISCELLANEOUS: Both aaeA and aaeB are required for the function of \\
    the efflux system (By similarity). \\
-!- SIMILARITY: Belongs to the membrane fusion protein (MFP) \\
    (TC 8.A.1) family. \\
 GO:0006810; P:transport; IEA:HAMAP. \\
Complete proteome; Inner membrane; Membrane; Transmembrane; Transport. \\
TMONE \\
\end{displayquote}
An important challenge is that all examples in this real-world dataset are documents, and therefore, can not be processed directly. A series of work has presented on the topic of extracting vector representations for documents. In this paper, we employ Doc2Vec to represent all examples in our real-world dataset as vetors \cite{mikolov2013efficient, mikolov2013distributed, le2014distributed}. We represent each document as a $32$-dimensional real-value vector.

Comparison experiments are conducted on this real-world dataset. Empirical results, as shown in Table 7, are in agreement with our method.

\begin{table*}[h]
	\caption{Mean and Standard Deviation of Classification Accuracies of all Methods on Generated Synthetic Dataset with Linear mislabelled rate.}
	\label{sample-table}
    \smaller
	\centering
    \vbox{}
	\begin{tabular}{lllllll}
		\hline
		\multicolumn{7}{c}{Method} \\
		\hline
		SVM			& Elkan			& Natarajan		& Liu			& PGPU			& PGPUcv			&clean \\
		\hline
		$83.32$		& $84.14$		& $83.27$		& $80.66$		& $84.30$		& $\bm{84.36}$ 		& $83.98$\\
		\hline
	\end{tabular}
\end{table*}

In the experiments, our algorithms PGPU and PGPUcv outperform all others in most situations, though sometimes neither of them is the best. In our opinion, there are two reasons that make our algorithms not so good. Firstly, the Bayesian optimal relabelling process can not label all instances, which leads to a smaller labelled sample for training a classifier on. Therefore, when the original sample size is small, our algorithms do not out perform others. Secondly, our algorithms rely on an assumption that the mislabelled rate is monotone decreasing with the probabilistic gap $\Delta P(X)$. Therefore, when this assumption does not hold (for example, when the mislabelled rate is a constant), our algorithms could have a bad performance.

In addition, the performance of Elkan and Noto’s method and our algorithms PGPU and PGPUcv on PU data sometimes overcomes the one of SVM on the clean data. In our opinion, this phenomenon is because of the reweighting techniques used in these methods. Reweighting techniques can improve the performance of the based classification methods in many situations \cite{yang2007weighted}. When the mislabelled rate is not too high, it is possible for reweighting methods on PU data to outperform the SVM on the clean data.

\section{Conclusion and Future Work}

Learning classifier from positive and unlabelled data has many real-world applications. To solve this problem, in this paper, we focus on the measure of the difficulty of labelling examples, and develop a model based on an innovative conception, probabilistic gap. With help of the probabilistic-gap model, we propose a novel relabelling algorithm. This method can provide labels to a group of unlabelled examples, which are identical to the ones assigned by the Bayesian optimal classifier. At the end of this paper, the empirical results of experiments illustrate our method's efficiency and support the theoretical analysis.

In the future work, we are interested in developing models more sophisticated and applicable for mislabelled rates. These models could reveal the nature of PU learning and help to boost classification algorithms' performance.

In addition, our proposed method does not use the examples in the sub-domain $\Delta P(X) \in (l,0)$. We call this phenomenon as {\it domain bias}. The domain bias could undermine the performance of the classification, which cannot be completely solved by the KMM technique. To address this issue, two methods are probably applicable:
\begin{enumerate}
\item {\it Active learning}. Using active learning methods\cite{tong2001active}, we can select a group of unlabelled instances in the sub-domain $\Delta P(X) \in (l,0)$ and hire human experts to label them.

\item {\it Semi-supervised learning}. In our method, after the Bayesian optimal relabelling process, all examples have their labels except a group that lie in the sub-domain $\Delta P(X) \in (l,0)$. In our methods, these data is discarded. Meanwhile, semi-supervised learning exactly focuses on the problem where the dataset contains both labelled and unlabelled data \cite{luo2018semi,akbarnejad2018efficient,yu2018semi}. Therefore, the methods of semi-supervised learning could be helpful.
\end{enumerate}

\subsection*{Acknowledgement}

This work was supported in part by Australian Research Council Projects FL-170100117, DP-180103424, IH-180100002, and DE-190101473. The authors sincerely thank Jessa Bekker and Jesse Davis for the helpful discussions.

{
\normalsize
\bibliographystyle{plain}
\bibliography{PU_learning}
}

\end{document}